\newcommand{\RR}{\mathbb{R}} 
\newcommand{\XX}{\mathcal{X}} 
\newcommand {\inp}[1]{\left\langle #1 \right\rangle} 
\newtheorem{theorem}{Theorem}
\newtheorem{lemma}{Lemma}
\newtheorem{proposition}{Proposition}
\newtheorem{cor}{Corollary}
\newcommand{\kl}[1]{{\color{red}Kamel: {#1}}}
\definecolor{newcolor}{rgb}{.8,.349,.1}
\journal{Journal of Computational Physics}
\begin{document}

\verso{Kamel Lahouel \textit{etal}}

\begin{frontmatter}

\title{Learning nonparametric ordinary differential equations from noisy data}%

\author[1]{Kamel \snm{Lahouel}}
\ead{klahouel@tgen.org}
\author[2]{Michael \snm{Wells}}
\ead{mlwells@pdx.edu}
\author[2]{Victor \snm{Rielly}}
\ead{victor23@pdx.edu}
\author[3]{Ethan \snm{Lew}}
\ead{elew@galois.com}
\author[2]{David \snm{Lovitz}}
\ead{lovitz@pdx.edu}
\author[2]{Bruno M. \snm{Jedynak}\corref{cor1}}
\ead{bruno.jedynak@pdx.edu}
\cortext[cor1]{Corresponding author: 
  Tel.: +1-503-725-8283;  }
\address[2]{Dept. of Math \& Stat, Portland State University, 1855 SW Broadway, Portland, OR 97201 }
\address[1]{TGen, 445 N. Fifth Street, Phoenix, AZ 85004 }
\address[3]{Galois Inc., 421 SW 6th Avenue, Suite 300, Portland, Oregon 97204  }

\received{Submission}

\begin{abstract}
Learning nonparametric systems of Ordinary Differential Equations (ODEs) $\dot x = f(t,x)$ from noisy data is an emerging machine learning topic. We use the well-developed theory of Reproducing Kernel Hilbert Spaces (RKHS) to define candidates for $f$ for which the solution of the ODE exists and is unique. Learning $f$ consists of solving a constrained optimization problem in an RKHS. We propose a penalty method that iteratively uses the Representer theorem and Euler approximations to provide a numerical solution. We prove a generalization bound for the $L^2$ distance between $x$ and its estimator.  Experiments are provided for the FitzHugh–Nagumo oscillator, the Lorenz system, and for predicting the Amyloid level in the cortex of aging subjects.  In all cases, we show competitive results compared with the state-of-the-art. 
\end{abstract}

\end{frontmatter}

\section{Introduction}

\subsection{Description of the problem and related works}
Fitting a system of nonparametric ordinary differential equations (ODEs) $\dot x = f(t,x)$ to longitudinal data could lead to scientific breakthroughs in disciplines where ODEs or dynamical systems have been used for a long time, including physics, chemistry, and biology, see \cite{hirsch2012differential}. By nonparametric, we mean that there is no need to specify the functional form of the vector-field $f$ using a {pre-defined} finite dimensional parameter. Instead, this force field belongs to a functional space {and the number of parameters that characterize this vector field depends on the amount of data available.} This provides a great advantage in situations where the form of the vector field is unknown but data is available for learning. {The functional spaces considered are Reproducing Kernel Hilbert Spaces (RKHS) \cite{manton2015primer}, allowing for efficient optimization among other desirable properties.}

A particular difficulty arises when the data is sparse and noisy. This is often the case for longitudinal healthcare data obtained during hospital visits. These visits provide measurements that are sparse in time, with a high level of individual variability. The work presented in this paper has been motivated in part by the need to model the accumulation of the Amyloid protein in the brain of aging subjects.
{Understanding how amyloid contributes to the manifestation of Alzheimer's is a crucial task.  The algorithm discussed here will (we hope) shed more light on the development of this devastating disease.}

Fitting data to nonparametric ODEs is an inverse problem. It requires making assumptions on the initial state of the solution and on the vector field. Furthermore, one needs to make assumptions about the noise model and provide a tractable optimization algorithm. 

We now provide a short bibliographic survey. Further references can be found in the cited papers. First, note that if the time derivative ($\dot x$) was observed, then fitting ODEs to noisy data would reduce to solving a regression problem. This remark has led to the methods known as ``gradient matching'' and to the earliest success in fitting ODEs to data, see e.g. \cite{dondelinger2013ode,brunton2016discovering}. It consists in estimating the gradient from the data, then performing nonparametric regression to fit the vector filed $f$ and eventually, iterating, see \cite{niu2016fast}. These methods become inefficient when the data is sparse and/or noisy.  

Another approach consists in modeling $f$ with polynomials \cite{hu2020revealing}. {Alternatively, one could model $f$ using the units of a Deep Neural Network, see \cite{qin2019data,chen2018neural}.}
These methods integrate the solution along the vector field from guessed initial conditions and compare the resulting trajectories with the observations. Optimization is used iteratively to refine the estimation of $f$ and the initial conditions. Stochastic gradient descent and backpropagation is used in the latter case. 
Another modeling approach is to assume that $f$ belongs to {an}  {RKHS}. This idea, {also known under the name of kernel method,} could be traced back to \cite{koopman1931hamiltonian}. It was successfully applied to fluid mechanics in \cite{schmid2010dynamic}. This is the conceptual approach pursued here. 
We believe that this approach is well-motivated since there is a tight connection between the regularity (smoothness) properties of a kernel and the regularity properties of $f$. Specifically, one can choose an RKHS of vector-valued functions for which one is guaranteed the existence and uniqueness of the corresponding initial value problem. This is a necessary step in proving that more data would result in more accurate predictions. {Another advantage of kernel methods is that there is no need to choose a dictionary of functions as in \cite{brunton2016discovering}. Instead, one selects a kernel, which, our experiments suggest, is easier}. In \cite{dai2022kernel}, the authors assume that each coordinate of the trajectory belongs to a real-valued RKHS where the {functions'} input is   time. In their approach, they first retrieve the full trajectory solving a kernel ridge regression problem. {Next}, they solve for the vector field given the full trajectory {,} assuming that each coordinate of the vector  field can be written as a sum of a linear combination of functions {,which are defined} on each coordinate of the trajectory{.}  {Our framework allows for linear combinations  {of} pairwise products of such functions, as well.} The functions characterizing such a vector field are assumed to be in a {real-valued} RKHS taking a single coordinate as input. {In our approach, we make an assumption on the vector field. {This soft constraint translates to a soft constraint on the set of trajectories}}, without imposing additional constraints on the trajectory itself. As a result, we solve one optimization problem as opposed to the {two-step} approach in \cite{dai2022kernel}. Moreover, {we allow} for higher-order interaction terms compared to the pairwise single coordinates interaction assumed in the mentioned work.
In \cite{heinonen2018learning}, the authors use a Gaussian process (GP) for the vector field. This is the Bayesian counterpart of the frequentist RKHS modeling,  see \cite{kanagawa2018gaussian} for a review of the similarities and differences between RKHSs and GPs. Comparisons between a  collection of algorithms representative of  the state of the art and the proposed algorithm is provided in the experiment section. 


For the purpose of providing a visual and easy to understand illustration of the results generated by the algorithms presented in this paper, please see Figure \ref{fig:Lorenz_dim_plot}. The details of this experiment are provided in  section \ref{sec:oscilator}. We see that the proposed algorithm is able to recover a noisy trajectory and  extrapolate the data, contrary to a method that would use a regression model and ignore the ODE.    
\subsection{Main contributions}

The main contributions of this paper are as follows: 
\begin{enumerate}
    \item We present an RKHS model for fitting nonparametric ODEs to observational data. Conditions for existence and uniqueness of the solutions of the corresponding initial value problem are expressed in terms of the regularity of the kernel; 
    \item We propose a novel algorithm for estimating nonparametric ODEs and the   initial condition(s) from noisy data. This algorithm solves a constrained optimization problem using a penalty method;
   
    
    \item We derive and prove a consistency result for the prediction of the state (interpolation) at unobserved times. This is, up to our knowledge, the first result for the problem of fitting nonparametric ODEs to data.  
    \item We provide experiments with simulated data. We compare the proposed algorithm to 7 existing methods representing state of the art  for various noise levels. We show that our algorithm is competitive.
    \item We provide an experiment modeling the accumulation of Amyloid in the cortex of aging subjects. The data is sparse with, on average, three data points per trajectory (subject) and 179 trajectories. We show competitive performance compared to state of the art. 
\end{enumerate}
\begin{figure}[htb!]
    \centering
    \includegraphics[width=0.95\textwidth]{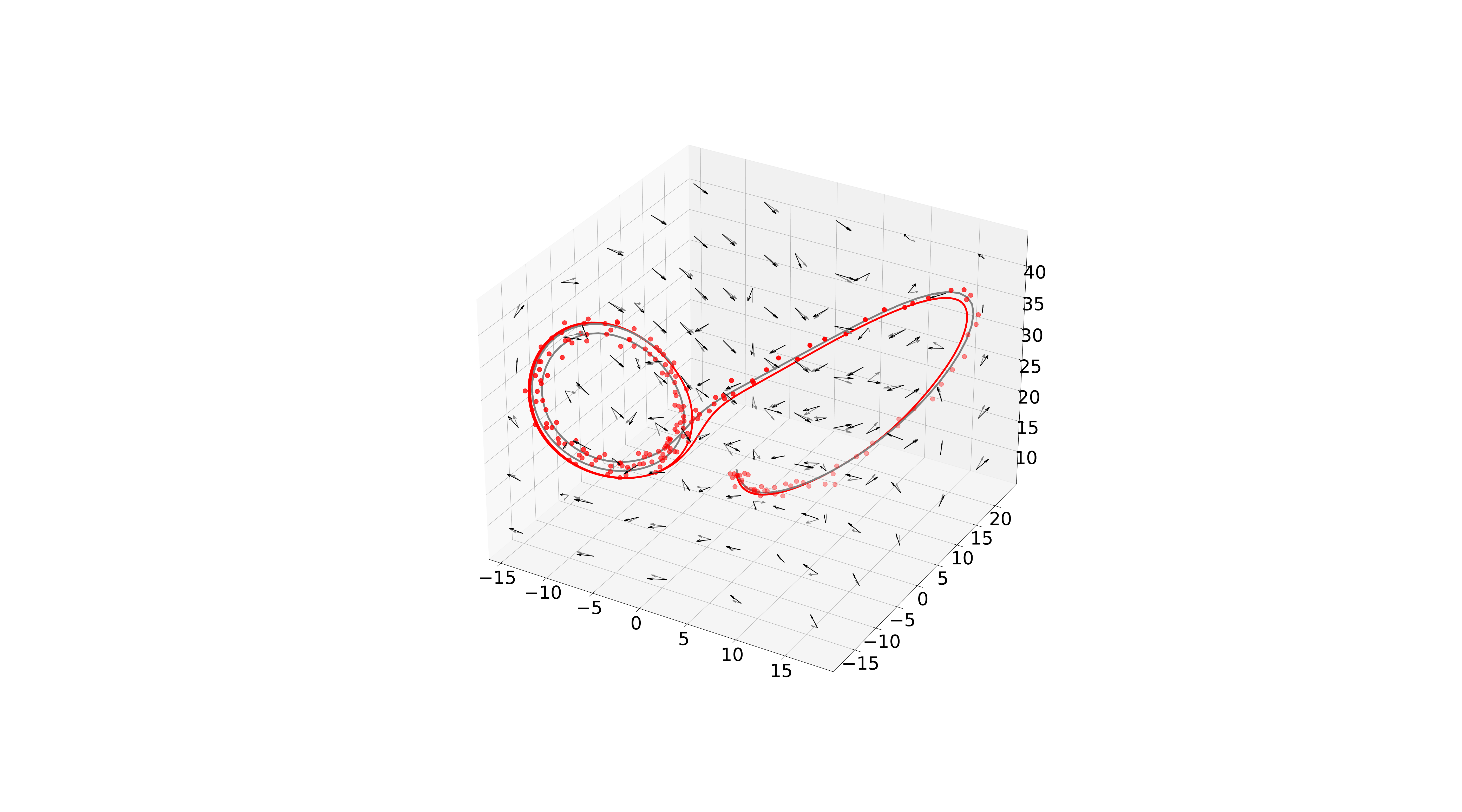}
    \includegraphics[width=.3\textwidth]{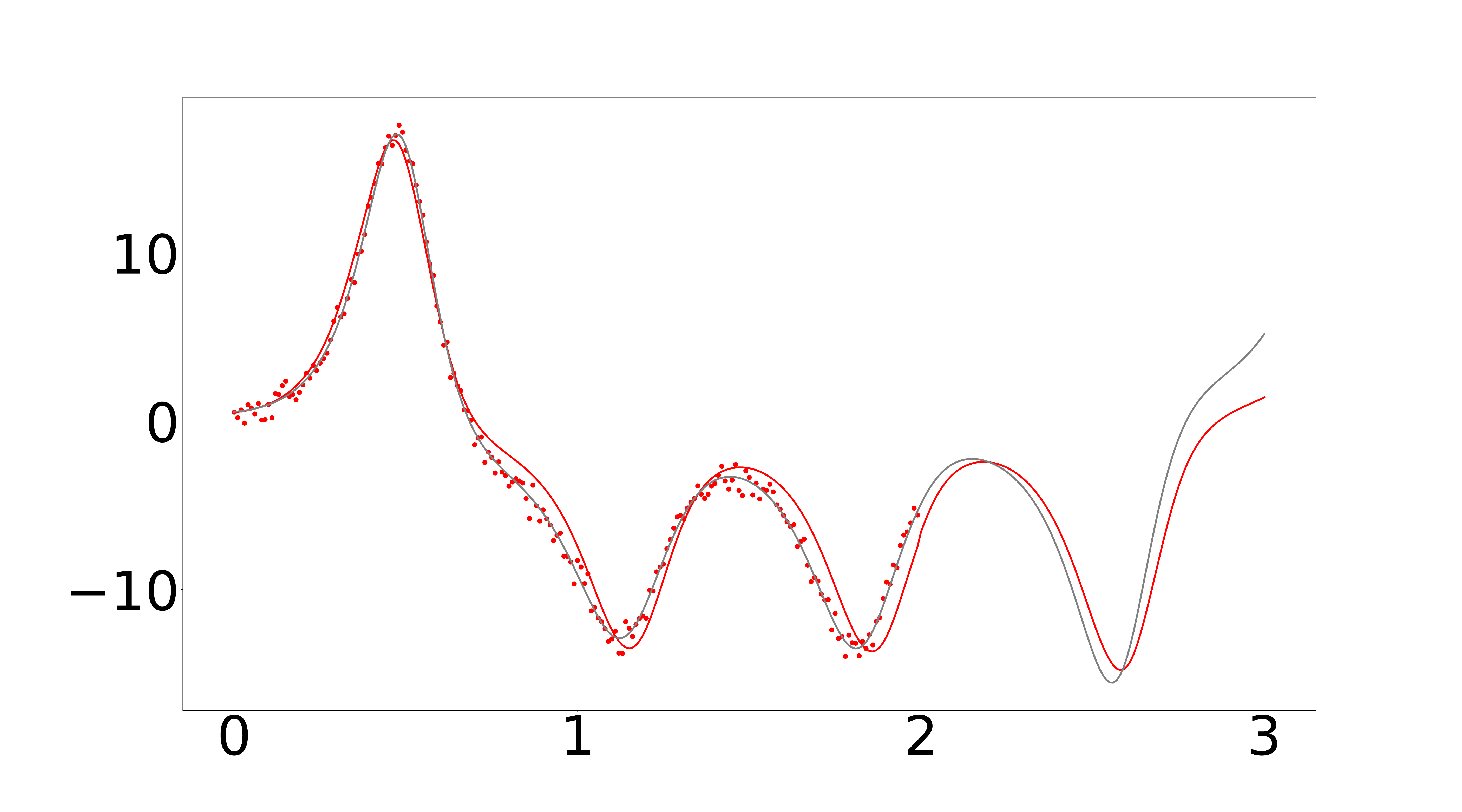}
    \includegraphics[width=.3\textwidth]{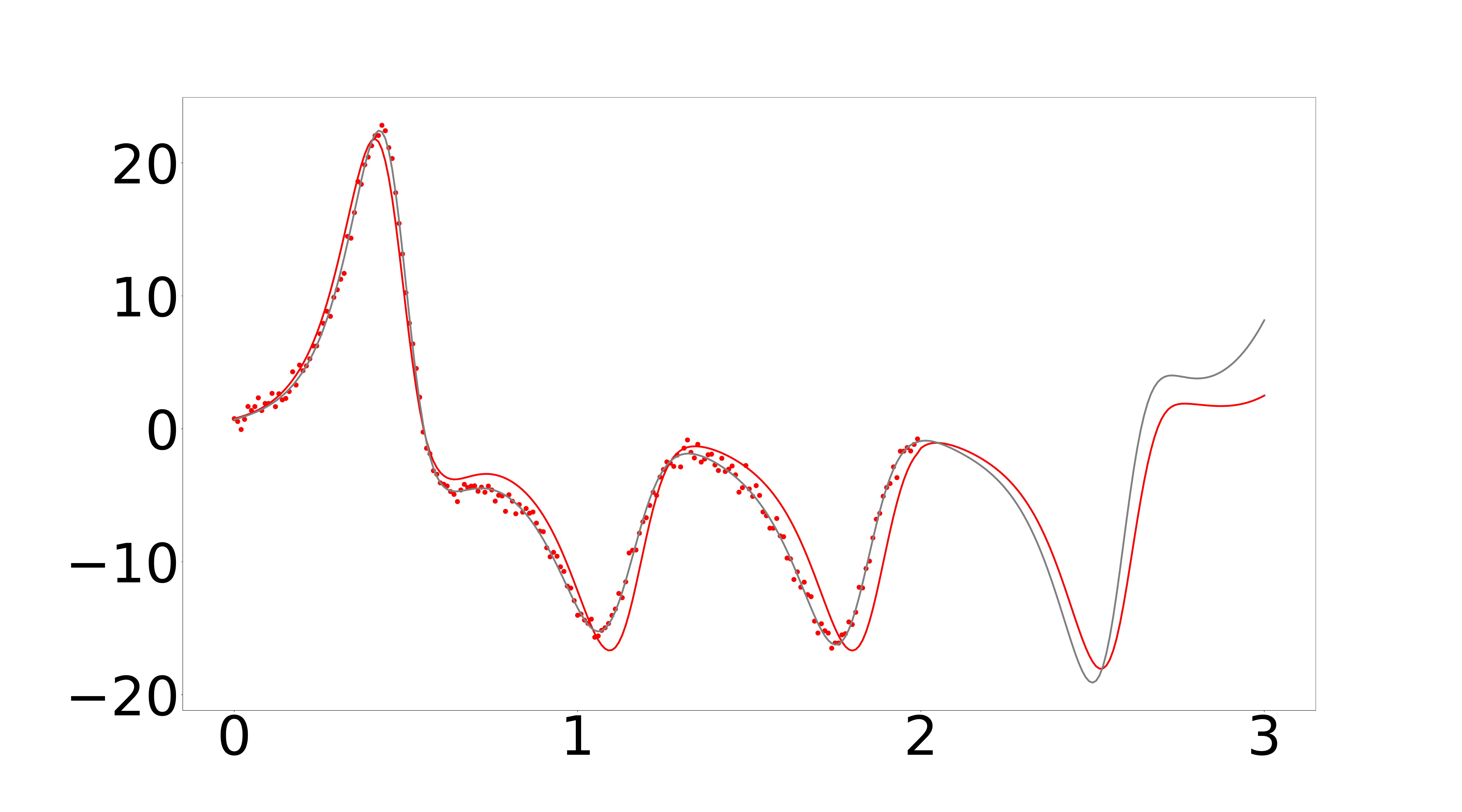}
    \includegraphics[width=.3\textwidth]{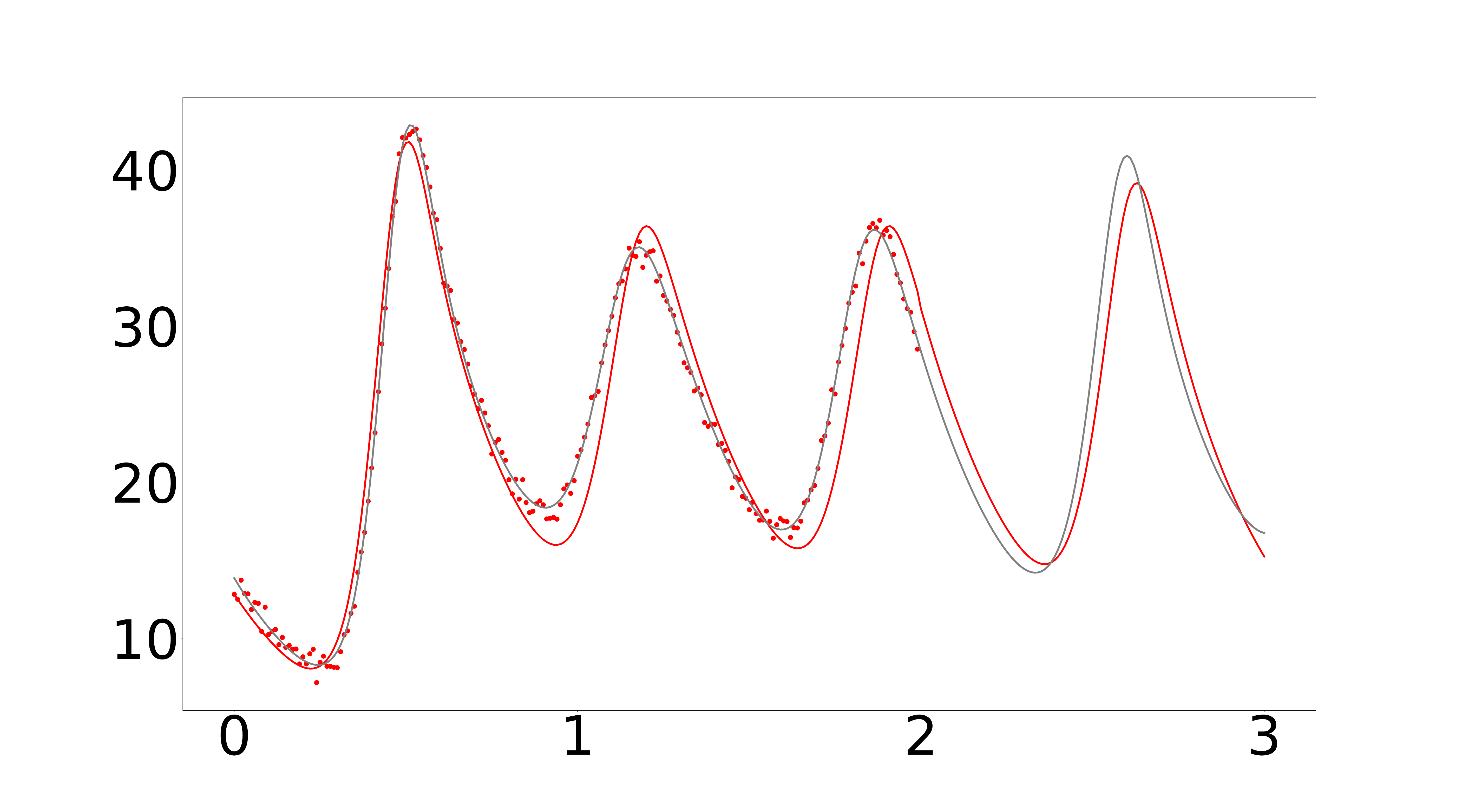}
    \caption{
     (a) Predicted vector field of the Lorenz system.  The Black arrows are the prediction and the grey are the true vector field.  Red points are observations.  The red curve is a predicted trajectory while the grey is the true trajectory.    (b) is the $x$-dimension,  (c) is the $y$-dimension and (d) is the $z$-dimension.  The red points are the observations.  This plot also shows a prediction beyond the last observation in the data. }
    \label{fig:Lorenz_dim_plot}
\end{figure}
The rest of this paper is organized as follows: Section 2 presents {some background material as well as} the model and the algorithms. The consistency results are presented in Section 3 and proved in Appendix A. The experiments appear in Section 4 while Section 5 provides concluding remarks. Appendix B provides examples of kernels.    
\section{Model and algorithm}
{\subsection{Background on Reproducing Kernel Hilbert Spaces (RKHSs)}

Basic notions and notations associated with RKHS are important for understanding the algorithms and derivations presented in this paper. We thus provide a short presentation. We limit ourselves to RKHS over the field of real numbers instead of complex numbers as this is sufficient throughout this paper. We begin with the univariate real-valued case and we continue with the vector-valued case which allows us to describe vector fields, central to this paper.  
\subsubsection{Real-valued RKHS}
Real-valued RKHS are Hilbert spaces of real-valued functions: $\XX \to \RR$, where $\XX$ is a nonempty space. The critical assumption which make them ``reproducing" is that the evaluation functional is continuous. The evaluation functional at $x \in \XX$ is a mapping from a RKHS $H$ to $\RR$, which associates to a function its evaluation at $x$, that is $f \mapsto f(x)$. Thanks to the Riesz representation theorem, evaluating a function in an RKHS is a geometric operation consisting in computing an inner product. Effectively, for any $x \in \XX$, there is a unique vector $k_x \in H$  such that 
\begin{equation}
    f(x)=\inp{f,k_x}_H
\end{equation}
where $\inp{.,.}_H$ is the scalar product associated with $H$. In what follows, we will simply notate $\inp{.,.}$ for this inner product. 
Moreover, let us define, for any $x,y \in \XX$, the so-called kernel 
\begin{equation}
    k(x,y)=\inp{k_x,k_y}
\end{equation}
and let us use this to characterize the function $k_x$. Evaluating $k_x$ at $y$ and using Riesz representation provides 
\begin{equation}
\label{eq:k_x}
k_x(y) = \inp{k_x,k_y}=\inp{k_y,k_x}=k(y,x)    
\end{equation}
Thus the function $k_x(.)$ is the function $k(.,x)$ and for any $f \in H$, 
\begin{equation}
f(x) = \inp{f,k(.,x)}    
\end{equation}
This is the reproducible property of the kernel. Replacing the function $f$ by $k_y$, and using \eqref{eq:k_x}, we obtain that
\begin{equation}
 k_y(x) = \inp{k_y,k(.,x)} =\inp{k(.,y),k(.,x)}=k(y,x) 
\end{equation}
\subsubsection{Vector-valued RKHSs}
Vector-valued RKHSs generalize the real-valued case. The construction is similar. Consider a Hilbert space of functions from $\XX$ to $\RR^d$. Assume, moreover, as in the real-valued case, that the evaluation functional is continuous. Riesz representation theorem then states that for any $x \in \XX$, and $v \in \RR^d$, there exists a unique element in $H$, notated $K_{x,v}$ such that $v^Tf(x) = \inp{f,K_{x,v}}$. The kernel of $H$ is then the  $(d,d)$ matrix where the element $(i,j)$ at the $i^{th}$ row and $j^{th}$ column is defined by 
\begin{equation}
\label{eq:r-prop}
    K_{ij}(x,y)=\inp{K_{x,e_i},K_{y,e_j}}
\end{equation}
where $(e_1,\ldots,e_d)$ is the natural basis of $\RR^d$. Let us use \eqref{eq:r-prop} to characterize the function $K_{x,v}$. We start with $K_{y,e_j}$ and use the reproducing property as well as the symmetry of the inner product. 
\begin{equation}
    e_i^TK_{y,e_j}(x)=\inp{K_{y,e_j},K_{x,e_i}}=\inp{K_{x,e_i},K_{y,e_j}}=K_{ij}(x,y)=e_i^TK(x,y)e_j
\end{equation}
Thus $K_{y,e_j}(.)=K(.,y)e_j$, and
\begin{equation}
\label{eq:repro}
    v^Tf(x)=\inp{f,K_{x,v}}=\inp{f,K(.,x)v} 
\end{equation}
which is the reproducing property for vector-valued RKHS. Applying \eqref{eq:repro} to the function $x \mapsto K(x,y)w$, for $w \in \RR^d$ provides
\begin{equation}
    v^TK(x,y)w = \inp{K(.,y)w,K(.,x)v}=\inp{K(.,x)v,K(.,y)w}
\end{equation}
Lastly, a useful property of the kernel $K$ is that $K(x,y)^T=K(y,x)$. Indeed, 
\begin{multline}
    K_{ji}(x,y)=e_j^TK(x,y)e_i=\inp{K(.,x)e_j,K(.,y)e_i}=\inp{K(.,y)e_i,K(.,x)e_j}=e_i^TK(y,x)e_j=K_{ij}(y,x)
\end{multline}
Choosing $\XX=\RR^d$ allows for defining autonomous vector fields, that is functions $\RR^d \to \RR^d$, and choosing a suitable kernel allows for choosing Lipschitz continuous vector fields as will be discussed in Section \ref{ExUni}.  
}
\subsection{Notations}
{The observations are characterized by multiple time series. There are $n$ times series. The $i^{th}$ one is of length $m_i$. It is characterized by $m_i$ couples $(t_{ij},y_{ij}(t_{ij})),i=1,\ldots,m_i$, where $t_{ij} \in [0,T]$ for some maximum predefined time $T$, and the observations $y_{ij}(t_{ij})$ belong to $\RR^d$.} 

We aim to make predictions at new time points along a {time series} having one or several noisy snapshots. To this end, we explore the following nonparametric ODE model: 
\begin{equation}
\label{eq:model}
\left\{
    \begin{array}{ccc}
       \dot x  & =& f(t,x)\\
        y_{ij}(t_{ij}) & = & x(t_{ij}) + \epsilon_{ij} 
    \end{array}
    \right.
\end{equation}
where $i=1, \ldots, n$, $j=1, \ldots,m_i$. The noise $\epsilon_{ij}$ is bounded or sub-Gaussian. 
This model is nonparametric because $f$ is not specified parametrically. We assume that $f$ belongs to a {RKHS} of smooth functions for which the solution of the ODE exists and is unique, see Section \ref{ExUni}. Background material on RKHS can be found in \cite{hofmann2008kernel} and vector-valued RKHS are reviewed in \cite{alvarez2011kernels}. The rest of the paper is written for the autonomous case when $f(t,x)=f(x)$ {and for the simpler situation where $m_i$ is the same for all time series and when the time points $t_{ij}$ are the same for all the time series i.e. do not depend on $i$.} However, {we will point to the  modifications for the non-autonomous setting when necessary, as well as the situation of non regular sampling.}

\subsection{Existence and uniqueness}
\label{ExUni}
It is a classical result, see \cite{simmons2016differential}, that the initial value problem (IVP):
\begin{equation}
\label{eq:ivp}
    \dot{x}(t)=f(x(t)) \mbox{ and } x(0)=x_0,
\end{equation}
where $f: \RR^d \rightarrow \RR^d$ is Lipschitz continuous has a unique solution defined on the domain $[0, +\infty)$.

Let $H$ be an RKHS of vector-valued functions $\RR^d \mapsto \RR^d$ and let $K$ be the reproducing kernel of $H$. $K$ is a $(d,d)$ matrix-valued kernel. It is then natural to ask: what is a sufficient condition on $K$ which ensures that all $f \in H$ are  Lipschitz continuous? The following  {lemma}  provides an answer. 

\begin{lemma}
\label{cor:LipKernel}
If $f: \RR^d \rightarrow \RR^d$ belongs to an RKHS with kernel $K$ such that:
\begin{multline}
\label{eq:d_K}
d_{K_{ii}}^2(u,v):=K_{ii}\left(u,u\right)-2K_{ii}\left(u,v\right)+K_{ii}\left(v,v\right) \leq  N_K^2 |u-v|^2, \forall u,v \in \RR^{d}, i=1\ldots d,
\end{multline}
for some constant $N_K$, then the IVP problem \eqref{eq:ivp}  has a unique solution defined on  $[0, +\infty)$.
\end{lemma}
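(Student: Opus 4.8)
The plan is to reduce the claim to the classical existence-and-uniqueness theorem quoted just above the statement by showing that hypothesis \eqref{eq:d_K} forces every $f \in H$ to be globally Lipschitz continuous on $\RR^d$. Once a global Lipschitz constant is in hand, the IVP \eqref{eq:ivp} automatically has a unique solution defined on all of $[0,+\infty)$, so all the real work lies in the Lipschitz estimate.

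First I would work coordinate by coordinate. Fix $f \in H$ and an index $i$. Using the reproducing property \eqref{eq:repro} with $v=e_i$, the $i$-th component of $f$ can be written as $e_i^T f(x) = \inp{f,K(\cdot,x)e_i} = \inp{f,K_{x,e_i}}$. Subtracting the values at two points $u,v \in \RR^d$ and applying Cauchy--Schwarz gives
\begin{equation}
|e_i^T(f(u)-f(v))| = |\inp{f,K_{u,e_i}-K_{v,e_i}}| \leq \|f\|_H\,\|K_{u,e_i}-K_{v,e_i}\|_H .
\end{equation}
The second step is to recognize that the squared norm on the right is exactly the quantity appearing in \eqref{eq:d_K}. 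Expanding and using $\inp{K_{u,e_i},K_{v,e_i}}=K_{ii}(u,v)$ from \eqref{eq:r-prop} yields
\begin{equation}
\|K_{u,e_i}-K_{v,e_i}\|_H^2 = K_{ii}(u,u)-2K_{ii}(u,v)+K_{ii}(v,v) = d_{K_{ii}}^2(u,v) \leq N_K^2|u-v|^2 .
\end{equation}
This is the crux of the argument: the diagonal-kernel condition \eqref{eq:d_K} is precisely a bound on the Hilbert-space distance between the representers $K_{u,e_i}$ and $K_{v,e_i}$, which explains why only the diagonal entries $K_{ii}$ need to be controlled.

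Finally I would assemble the coordinates. Combining the two displays gives $|e_i^T(f(u)-f(v))| \leq N_K\|f\|_H|u-v|$ for each $i$, and summing the squares over $i=1,\ldots,d$ produces
\begin{equation}
|f(u)-f(v)|^2 = \sum_{i=1}^d |e_i^T(f(u)-f(v))|^2 \leq d\,N_K^2\|f\|_H^2\,|u-v|^2 ,
\end{equation}
so $f$ is globally Lipschitz with constant $\sqrt{d}\,N_K\|f\|_H$. Invoking the classical theorem cited for \eqref{eq:ivp} then finishes the proof. I do not anticipate a serious technical obstacle; the only point demanding care is the distinction between local and global existence. The standard Picard--Lindel\"of argument alone yields merely a local solution, and it is the \emph{global} character of the bound obtained above --- valid for all $u,v \in \RR^d$ rather than on a compact set --- that is essential for upgrading local solvability to a solution on the entire half-line $[0,+\infty)$.
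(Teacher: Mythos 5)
Your proposal is correct and follows essentially the same route as the paper: the reproducing property applied coordinatewise, Cauchy--Schwarz, and the identification of $\|K_{u,e_i}-K_{v,e_i}\|_H^2$ with $d_{K_{ii}}^2(u,v)$, followed by the classical global existence--uniqueness theorem for Lipschitz vector fields. Your explicit assembly of the coordinates into the constant $\sqrt{d}\,N_K\|f\|_H$ and your remark on the global (rather than merely local) character of the bound are details the paper leaves implicit, but the argument is the same.
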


\begin{proof}
Notice that for every  {$i=1,\ldots,d$}
\begin{align}
|f_i(u)-f_i(v)|^2 &=|{\left<K(u,\cdot)e_i-K(v,\cdot)e_i,f\right>}_H|^2 \\
                & \leq {||K(u,\cdot)e_i-K(v,\cdot)e_i||}_H^2 {||f||}_H^2 \\
                & = d_{K_{ii}}^2(u,v) {||f||}_H^2
\end{align}
where $e=(e_1,\ldots,e_d)$ is the natural basis of $\RR^d$. Here we have used the reproducing property of the matrix-valued kernel and the Cauchy-Schwartz inequality. 
\end{proof}
Thus, one can choose a kernel that guarantees the existence and uniqueness of the solution of the IVP, which will lead to provable asymptotic performance. We believe that this simple result is a good motivator for the proposed modeling approach.

Let us discuss some examples of kernels satisfying {lemma} 1. The simplest matrix-valued kernels are separable kernels. They are obtained by choosing a scalar kernel $K_1$ and a positive semi-definite matrix $A$. Then, 
\begin{equation}
    K(x,y) = K_1(x,y)A
\end{equation}
The diagonal elements of $K$ are then positive multiples of $K_1$. Thus, if $K_1$ verifies the regularity condition of {lemma 1} , then so do all the separable kernels based on $K_1$. The scalar kernels satisfying the hypothesis of {lemma 1} contain the linear kernel, the Gaussian Kernel, the rational quadratic kernel, the sinc kernel and the mattern kernels for $p>3/2$. Kernels for which the functions in their corresponding RKHSs are not guaranteed to provide unique solutions to the corresponding IVP due to lack of regularity include the polynomial kernels with an order of at least two, the Laplacian kernel and the Mattern kernel for $p\leq 3/2$ Details are provided in  \ref{sec-supp:kernels}. 
{The condition of lemma 1 has a nice interpretation in the case where explicit kernels are used. Indeed, when a feature map associated with the kernel is given explicitly, the conditions of lemma 1 are equivalent to assuming Lipschitz continuous features. The details are provided in the Appendix B. \\} 
{Note on the non-autonomous case: When the vector field is time-dependent denoted by $f(t,x)$, the kernel is defined on $\RR^d \times [0,\infty)$. It is sufficient to assume a global Lipschitz condition with respect to the second variable \cite{simmons2016differential}, namely: There exists a constant $L_K$ such that for every $t\geq 0$ and $u,v \in \RR^d$ and $i \in 1,...,d$:
\begin{equation}
|f_i(t,u)-f_i(t,v)| \leq L_K |u-v|
\end{equation}
It is therefore sufficient to assume a a kernel $K$ defined on $\RR^d \times [0,\infty)$ and satisfying the conditions of lemma \label{cor:LipKernel} as it will ensure the following inequality:
\begin{equation}
d_{K_{ii}}^2(t,u,t,v) \leq N_K^2 |u-v|^2
\end{equation}
}

\subsection{From constrained to unconstrained optimization} 

We first construct the optimization algorithm in the case $n=1$. All the observations are from a single trajectory with the same initial condition. Thus, we temporarily  drop the double indexing with subjects and times to simplify the notation.

Assume the observation times are $t_1<\ldots<t_m$. Consider the following constrained minimization problem: 
\begin{equation}
\label{eq:main}
    \min_{x,f} \frac{1}{m}\sum_{j=1}^m |y_j - x(t_j)|^2 + \lambda ||f-f_0||_H^2,
\end{equation}
under the constraints
\begin{equation}
\label{eq:main_c}
\left\{
\begin{array}{l}
 f  \in  H, \mbox{ the RKHS with matrix-valued kernel }K,\\
x(t) =x(t_1) + \int_{t_1}^t f(x(s))ds, \mbox{ for }t_1\leq t \leq t_m.  
\end{array}
\right.
\end{equation}
The function $f_0 \in H$ is an initial guess for $f$.  Section \ref{sec:ic} describes a gradient matching algorithm for selecting $f_0$. K is a kernel that satisfies  {lemma} 1. 

Consider a regular one-dimensional grid over the interval $[t_1,t_m]$. Specifically, we choose 
\begin{equation}
    s_l=t_1+lh
\end{equation}
with $l=0, \ldots, k$ and we assume that $h$ is small enough so that there are integers $k_1=0 < k_2 < \ldots < k_m$, such that the observation times are 
\begin{equation}
    t_j=t_1+ k_j h,  j=1\ldots m.
\end{equation} 
In practice, the observation times are rounded to fit on this grid. Note that with this notation, $t_j=s_{k_j}$ . We now proceed through a series of transformations to rewrite this constrained optimization problem into an unconstrained one. 

First, we replace the constraints on $x$ by a finite number of constraints as follows:
\begin{equation}
\left\{
\begin{array}{l}
 f  \in  H, \mbox{ the RKHS with kernel }K,\\
x(s_{l+1}) = x(s_l) + \int_{s_l}^{s_{l+1}} f(x(s))ds \\ l=0\ldots k-1. 
\end{array}
\right.
\label{eq:constraints}
\end{equation}
Second, we discretize the constraints using the Euler method of integration:
\begin{equation}
\left\{
\begin{array}{l}
 f  \in  H, \mbox{ the RKHS with kernel }K,\\
x(s_{l+1}) = x(s_l) + hf(x(s_l)) \\\mbox{ for } l=0\ldots k-1. 
\end{array}
\right.
\label{eq:Euler}
\end{equation}
Third, we replace the constrained optimization problem by an unconstrained one using a single Lagrange constant $\gamma>0$. Notate $z_l=x(s_l)$, $l=0\ldots k$, 
\begin{equation}
\min_{z\in \RR^{d(k+1)},f\in H} J(z,f,\gamma),
\label{eq:unconstrained}
\end{equation}
with
\begin{equation}
    J(z,f,\gamma)=\frac{1}{m}\sum_{j=1}^m |y_j - z_{k_j}|^2+ \gamma \frac{1}{k} \sum_{l=0}^{k-1}\left|z_{l+1}-z_l-hf(z_l)\right|^2  
     + \lambda ||f-f_0||_H^2 . 
    \label{eq:JEuler}
\end{equation}
{It is instructive to remark the similarities between the loss function in equation \ref{eq:JEuler}  and the loss proposed  in Physics-informed Neural Networks \cite{RAISSI2019686}, where the observations are generated from an unknown partial differential equation. Indeed, the total loss function in Physics-informed Neural Networks can be decomposed as a sum of two functions: One that measures the deviation of solution from the observations, and the second usually defined as the residual function term, measures the violation of the partial differential equation constraint that the solution must satisfy. In our context,
$$\frac{1}{m}\sum_{j=1}^m |y_j - z_{k_j}|^2$$ corresponds to first function, and $$\frac{1}{k} \sum_{l=0}^{k-1}\left|z_{l+1}-z_l-hf(z_l)\right|^2$$ corresponds to the residual function term.
However there are some notable differences. In physics informed neural networks, the form of the PDE is known up to finite dimensional parameters. The loss is viewed as a function of the solution to the partial differential equation and these finite dimensional parameters. The solution itself is modeled by a neural network. In our case, the loss is viewed as a function of the vector field and the initial solution. The differential equation is therefore characterized by the RKHS, usually infinite-dimensional. Moreover, equation \ref{eq:JEuler} contains a regularization term penalizing vector fields with large RKHS norm, which is typical of loss function parametrized by RKHS functions.}
\subsection{Penalty method}
The penalty method is an iterative method that consists of enforcing the constraints by increasing a penalty parameter, in this case $\gamma$. The schematic of the method is presented in Algorithm \ref{alg:ODE-RKHS}. At each step, the functional $J(z,f,\gamma)$ in \eqref{eq:JEuler} is minimized with respect to $(z,f)$, for a fixed value of $\gamma$. Then,  $\gamma$ is increased. The optimization for $(z,f)$ is done asynchronously, first optimizing over $z$ for a fixed $f$, then optimizing over $f$ for the newly updated $z$.  

Let us now describe these optimization steps in more detail. For a fixed $\gamma$ and $f$, $J(z,f,\gamma)$ in \eqref{eq:JEuler} is non-convex in $z$ due to the presence of $f(z_l)$. Therefore we replace $f$ by its first-order Taylor expansion evaluated at the value $z_l^{(s)}$ obtained in the previous iteration $s$: 
\begin{equation}
\label{eq:Taylor}
    f(z_l) \approx f(z_l^{(s)}) + (z_l-z_l^{(s)})^T\nabla_{z_l}f(z_l^{(s)})
\end{equation}
Note that with this approximation, $J$ is convex, quadratic, and sparse in $z$. This allows the use of an efficient linear solver for this minimization. The number of unknowns is $d(k+1)$. \\
{Note on the non-autonomous case: When the vector field 
is time-dependent, the vector field is evaluated at points of the form $f(t_l,z_l)$. Notice that the $t_l$'s are the time points of the grid, therefore fixed and known. Hence, the linearization in equation \eqref{eq:Taylor} is made only with respect to the space variable:
\begin{equation}
\label{eq:Taylor}
    f(z_l,t_l) \approx f(z_l^{(s)},t_l) + (z_l-z_l^{(s)})^T\nabla_{z_l}f(z_l^{(s)},t_l)
\end{equation}
}
For a fixed $\gamma$ and $z$, minimizing $J$ in $f$ is equivalent to a multivariate kernel ridge regression problem. After the change of variable, $g=f-f_0$, and setting
\begin{equation}
    u_l = (z_{l+1}-z_l)/h - f_0(z_l), l=0\ldots k-1, 
\end{equation}
we use the representer theorem to show that the minimizer in $f \in H$ of $J$ is of the form
\begin{equation}
    f(z) = f_0(z)+\sum_{l=0}^k K(z,z_l)w_l,
\end{equation}
where $w_l \in \RR^d$. 
Let $W=(w_1^T,\ldots,w_{k+1}^T)$, be of dimension $(d(k+1),1)$ and similarly let $U=(u_1^T,\ldots,u_{k+1}^T)$ and $K$ be the matrix with $(d,d)$ block element $K_{kl}=K(x_k,x_l)$. We find that $W$ is a minimizer of the convex quadratic function 
\begin{equation}
    \frac{\gamma h^2}{k}|U-KW|^2 + \lambda W^TKW
\end{equation}
and thus $W$ is the solution to the linear system: 
\begin{equation}
    \left(K + \frac{\lambda k}{\gamma h^2}I\right) W=U
\end{equation}
The schematic algorithm is provided in Algorithm \ref{alg:ODE-RKHS}. 
\begin{algorithm}[tb]
\begin{algorithmic}[1]
\STATE Init: $h,\rho,\lambda,f^{(0)},\gamma^{(0)},s=0$
 \WHILE{termination condition is not met}
  \STATE $z^{(s+1)} \leftarrow \arg\min_{z \in \RR^{d(k+1)}} J(z,f^{(s)},\gamma^{(s)})$ 
  \STATE $f^{(s+1)} \leftarrow \arg \min_{f \in H} J(z^{(s+1)},f,\gamma^{(s)})$
  \STATE $\gamma^{(s+1)} \leftarrow \gamma^{(s)}(1+\rho)$
  \STATE $s=s+1$
  \STATE Check termination condition\;
 \ENDWHILE
\end{algorithmic}
 \caption{Penalty method for ODE-RKHS}
 \label{alg:ODE-RKHS}
\end{algorithm}

\subsection{Initial condition and termination criteria}
\label{sec:ic}
Since the algorithm will converge to a local minimum of the cost function, the choice of the initial condition is important. We use a gradient-matching method. 
\begin{enumerate}
    \item Approximate the time derivatives of $x$ at the observed times $\dot x(t_j)$, denoted $\hat{\dot {x}}(t_j)$
    \item Estimate $f_0 \in H$ using ridge regression, i.e. minimize over $H$
    \begin{equation}
        G(f_0) = \frac{1}{m} \sum_{j=1}^m |\hat{\dot{x}}(t_j) - f_0(y_j)|^2 + \lambda ||f_0||_H^2
        \label{eq:GradientMatching}
    \end{equation}
\end{enumerate}  
There are several possibilities for the approximation in the first step depending on the sparsity of the data and the amount of noise. In the experiments below, we use central differences. 

The termination condition of Algorithm \ref{alg:ODE-RKHS} includes a fixed number of iterations $S$ and a threshold on the quantity  $||f^{(s+1)}-f^{(s)}||/||f^{(s)}||$ which allows for early stopping.  

\subsection{Multiple trajectories}
We present here the extension of the method to multiple trajectories, say $n>1$ subjects. We assume the same number of observations for each subject and regular sampling to simplify the presentation. 

First, we replace \eqref{eq:JEuler} and \eqref{eq:constraints} with 
\begin{equation}
    \min_{x,f} \frac{1}{nm}\sum_{i=1}^n\sum_{j=1}^m |y_{ij} - x_i(t_{ij})|^2 + \lambda ||f-f_0||_H^2,
\end{equation}
under the constraints
\begin{equation}
\left\{
\begin{array}{l}
 f  \in  H, \mbox{ the RKHS with matrix-valued kernel K},\\
x_i(t) =x_i(t_1) + \int_{t_1}^t f(x_i(s))ds,\\ \mbox{ for }t_1\leq t \leq t_m, i=1 \ldots n
\end{array}
\right.
\end{equation}
We then proceed along the same steps as for the single trajectory case, leading to the unconstrained optimization problem, generalizing \eqref{eq:unconstrained} and \eqref{eq:JEuler}. 

Notate $z_{il}=x_i(s_l)$, $l=0\ldots k$, $i=1 \ldots n$, and $z=(z_1,\ldots,z_n)$  
\begin{equation}
\label{eq:unconstained}
\min_{z\in \RR^{nd(k+1)},f\in H} J_{\text{multi}}(z,f,\gamma),
\end{equation}
with
\begin{multline}
    J_{\text{multi}}(z,f,\gamma)=\frac{1}{nm}\sum_{i=1}^n\sum_{j=1}^m |y_{ij} - z_{ik_j}|^2  + \gamma \frac{1}{nk} \sum_{i=1}^n \sum_{l=0}^{k-1}\left|z_{i,l+1}-z_{il}-hf(z_{il})\right|^2 + \lambda ||f-f_0||_H^2 . 
    \label{eq:JEuler Multi}
\end{multline}
The key point is that $J_{\text{multi}}$ decouples the trajectories such that the optimization over $z$ can be carried out separately for each trajectory. However, all the observations contribute to the estimation of $f$.  The algorithm is presented in Alg \ref{alg:ODE-RKHS Multi}. In Line 6: we use the no-trick formulation using Gaussian quadrature Fourier features as described in \cite{dao2017gaussian}.
\begin{algorithm}[tb]
\begin{algorithmic}[1]
\STATE Init: $h,\rho,\lambda,f^{(0)},\gamma^{(0)},s=0$
 \WHILE{termination condition is not met}
 \FOR{$i=1\ldots n$}
  \STATE $z_i^{(s+1)} \leftarrow \arg\min_{z_i \in \RR^{d(k+1)}} J_{\text{multi}}(z,f^{(s)},\gamma^{(s)})$
  \ENDFOR
  \STATE $f^{(s+1)} \leftarrow \arg \min_{f \in H} J_{\text{multi}}(z^{(s+1)},f,\gamma^{(s)})$
  \STATE $\gamma^{(s+1)} \leftarrow \gamma^{(s)}(1+\rho)$
  \STATE $s=s+1$
  \STATE Check termination condition\;
 \ENDWHILE
\end{algorithmic}
 \caption{Multi Trajectories Penalty method for ODE-RKHS}
 \label{alg:ODE-RKHS Multi}
\end{algorithm}

\subsection{Computational Complexity}
We analyze the complexity of the algorithm Alg \ref{alg:ODE-RKHS Multi}. The key parameters are: 
\begin{enumerate}
    \item $d$: the dimension of the observed vectors;
    \item $n$: the number of observed trajectories;
    \item $k$: the number of samples in the discretization of the time interval;
    \item $S$: the number of steps in Alg \ref{alg:ODE-RKHS Multi};
    \item $n_F$: the number of Fourier features. 
\end{enumerate}
We use $O(p^3)$ for the time complexity of solving a (dense) linear system with $p$ variables and $O(w^2p)$ in the case of a band matrix of width $w$, see \cite{kilicc2013inverse}. Alg \ref{alg:ODE-RKHS Multi}, line 4 consists in solving a linear system of size $dk$ with a band matrix of bandwidth $w=3d$, thus   $O(kd^3)$ computations. Line 6 consists in solving $d$ full linear systems of dimension $n_F$, thus $O(dn_F^3)$ computations. 
In total, we find $O(Snkd^3 + Sdn_F^3)$. Note that $k$ is typically chosen proportional to the average number of data points per trajectory. Thus, overall, the algorithm is linear in the number of observations but cubic in the dimension of the observations.

\subsection{Non autonomous systems, covariates, and irregular sampling}
Non autonomous systems and covariates are handled by modifying the kernel. The issue of irregular sampling is addressed by replacing the first term of \eqref{eq:JEuler}  by  
\begin{equation}
    \frac{1}{n} \sum_{i=1}^n \sum_{j=1}^{m_i} (t_{i,j+1}-t_{ij})|y_{ij} - z_{ik_j}|^2
\end{equation}
with $t_{i,m_i+1}=T$, $i=1\ldots,n$

\section{Consistency of the solution: A finite sample result}
\label{section:cons}
In this section, we assume that our algorithm solves the following optimization problem (where $t_{m+1}=T$ by definition):
\begin{equation}
\min_{\RR^{d(k+1)},f \in H}\sum_{j=1}^m (t_{j+1}-t_j)|y_j - z_{k_j}|^2,
\end{equation}
Under the constraints:
\begin{enumerate}
    \item ${||f-f_0||}_H \leq R$, $|z_0| \leq r$
    \item $z_{l+1}=z_l+hf(z_l)$, $0\leq l \leq k$
\end{enumerate}

Notice that constraint  2 corresponds to the  Euler method for the ODE: $\dot{x}=f(x)$. Therefore, by linearly interpolating between the times of subdivision $s_l$, $0\leq l \leq k$, we can generate a solution $\hat{x}(\cdot)$ defined on $[0,T]$. We denote by $x^{*}(\cdot)$ the true trajectory generating the noisy observations $y_j$ at each time $t_j$. The purpose of this section is to present a result controlling (in probability) the $L^2$ norm squared of $\hat{x}-x^{*}$ :
\begin{equation}
{||\hat{x}-x^{*}||}_{L^2}^2:=\int_{0}^T{|(\hat{x}(t)-x^{*}(t))|}^2 dt
\end{equation}
Let us make the following assumptions:
\begin{itemize}
\item $\mathbf{A_1}$: There exists an $f^* \in H, {||f^*-f_0||}_H \leq R$ and ${|x_0^*|} \leq r$ such that $x^*(0)=x_0^*$ and $\dot{x}^*(t)=f^*(x^*(t))$ for every $0 \leq t \leq T$.
\item  $\mathbf{A_2}$: The noise variables $\epsilon_{ij}$ are independent and  bounded in absolute value by a constant $M_{\epsilon}$. (We can assume that the variables are subgaussian instead of bounded if we want to generalize this result)
\item  $\mathbf{A_3}$: The kernel $K$ is $\mathcal{C}^2(\RR^d)$ in its first argument (this implies that it is also $\mathcal{C}^2(\RR^d)$ in its second argument).
\item $\mathbf{A_4}$: The kernel $K$ satisfies \eqref{eq:d_K}. 

\end{itemize}

We refer to section \ref{ExUni} for examples of kernels satisfying $\mathbf{A_3}$ and $\mathbf{A_4}$.

These assumptions are sufficient for obtaining the  main theorem of this section, controlling ${||\hat{x}-x^{*}||}_{L^2}^2$ with high probability. 
\begin{theorem}
\label{MainTheo3}
Assuming $\mathbf{A_1}, \mathbf{A_2}$, $\mathbf{A_3}$ and  $\mathbf{A_4}$,  there exist  positive constants \\ $K_1$,$K_{2}$,$K_{3}$  and $K_{4}$, depending only on $R$, $r$, $T$, $M_\epsilon$, $N_K$ and the kernel $K$ 
such that for every $\epsilon>0$, with probability less than $\exp{\left(\frac{-K_{2}\epsilon^2}{d\sum_{j=1}^m{(t_{j+1}-t_j)}^2}\right)}:$
\begin{equation}
\label{eq:MainIneq}
{||\hat{x}-x^{*}||}_{L_2}^2   \geq  K_1d \sqrt{\sum_{j=1}^{m}{(t_{j+1}-t_{j})}^2} + h^2K_{3}d+K_{4}d\sum_{j=1}^{m}{(t_{j+1}-t_{j})}^2+\epsilon. 
\end{equation}

\end{theorem}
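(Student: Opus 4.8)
The plan is to run a ``basic inequality'' argument in the style of nonparametric regression, adapted to the facts that the natural comparator is an Euler-discretized trajectory and that the estimator only controls the state at the sampled times. Write $w_j=t_{j+1}-t_j$, and let $x^{*,h}$ denote the piecewise-linear trajectory obtained by applying the Euler scheme (constraint 2) to the true field $f^*$ started at $x_0^*$. By $\mathbf{A_1}$ the pair $(f^*,x_0^*)$ is feasible, so $x^{*,h}$ is an admissible competitor. Comparing the objective at the minimizer $\hat x$ with the objective at $x^{*,h}$, substituting $y_j=x^*(t_j)+\epsilon_j$, and cancelling the common squared-noise terms yields
\[
\sum_j w_j |\hat x(t_j)-x^*(t_j)|^2 \le 2\sum_j w_j \epsilon_j^T\!\left(\hat x(t_j)-x^{*,h}(t_j)\right) + \sum_j w_j |x^*(t_j)-x^{*,h}(t_j)|^2 .
\]

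First I would dispose of the two deterministic pieces. For the last sum, the global error of the Euler scheme applied to a Lipschitz (by Lemma 1 under $\mathbf{A_4}$), $C^1$ field is $O(h)$ uniformly on $[0,T]$, so each summand is $O(h^2)$ and the total is $O(h^2T)$; this produces the $h^2K_3 d$ contribution after tracking the dimensional factor. To make all constants uniform I must first establish an a priori bound: the reproducing property together with $\mathbf{A_3}$ (so each $K_{ii}$ is bounded on compacts) and $\|f-f_0\|_H\le R$ shows every admissible $f$ is bounded on bounded sets, and Gronwall's inequality then confines $\hat x$, $x^*$ and $x^{*,h}$ to a common ball of radius $D$ depending only on $R,r,T,K$. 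This $D$ controls both the diameters and the velocities used below; note that in $\RR^d$ this radius scales like $\sqrt d$.

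The stochastic term is where the randomness and the probability statement enter. Since $\hat x$ depends on the noise, I bound the inner product by the supremum over the admissible class,
\[
\Phi = \sup\left\{ \sum_j w_j \epsilon_j^T\!\left(x(t_j)-x^{*,h}(t_j)\right) : x \text{ admissible} \right\},
\]
and view $\Phi$ as a function of the $md$ independent, bounded (by $\mathbf{A_2}$) scalar noise coordinates. Changing one coordinate $\epsilon_{j,a}$ moves $\Phi$ by at most $2w_j M_\epsilon D$, so McDiarmid's bounded-differences inequality gives $\mathbb{P}(\Phi \ge \mathbb{E}\Phi + \epsilon) \le \exp(-c\,\epsilon^2/(d\,M_\epsilon^2 D^2 \sum_j w_j^2))$, matching the stated exponent with $K_2=c/(M_\epsilon^2 D^2)$. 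For the mean I would use Cauchy--Schwarz and $\mathbf{A_2}$: $\mathbb{E}\Phi \le D\,\mathbb{E}(\sum_j w_j^2|\epsilon_j|^2)^{1/2}\le D M_\epsilon\sqrt d\,(\sum_j w_j^2)^{1/2}$, and since $D=O(\sqrt d)$ this collapses to the $K_1 d\sqrt{\sum_j w_j^2}$ term.

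Finally I convert the sampled-time control into the continuous $L^2$ norm. Splitting $\int_0^T|\hat x-x^*|^2\,dt$ over the sampling intervals and using that $\hat x-x^*$ has uniformly bounded velocity (from the ball $D$ and the step $h$), each interval integral differs from $w_j|\hat x(t_j)-x^*(t_j)|^2$ by a term of order $w_j^3$ together with an $O(h^2)$ Euler piece; summing and bounding $\sum_j w_j^3\le(\max_j w_j)\sum_j w_j^2$ reproduces the $K_4 d\sum_j w_j^2$ and the remaining $h^2$ contributions. Collecting the four pieces and reading the McDiarmid estimate in its complementary form delivers \eqref{eq:MainIneq}. I expect the genuine obstacle to be the stochastic step: controlling the empirical process $\Phi$ uniformly over the nonlinear, Euler-generated image of the RKHS ball, and in particular forcing the bounded-differences constant and $\mathbb{E}\Phi$ to depend only on the advertised quantities rather than on the (large) number $k$ of grid points. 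This is precisely what makes the a priori boundedness argument a mandatory first step.
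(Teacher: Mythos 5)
Your overall skeleton is the right one and matches the paper's proof in most respects: the a priori confinement of all admissible trajectories to a ball via the reproducing property and Gronwall (the paper's Lemma~2), the Euler global truncation error producing the $h^2K_3d$ term, McDiarmid's bounded-differences inequality producing the stated exponential tail, and the Lipschitz/Riemann-sum argument converting the sampled-time loss into the continuous $L^2$ norm at a cost of $O(\sum_j w_j^2)$. Your basic inequality against the Euler competitor $x^{*,h}$ is a mild repackaging of the paper's decomposition $L(\hat f,\hat x_0)\le 2\sup|L-\hat L|+\min L$; either form works.

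The genuine gap is your bound on $\mathbb{E}\Phi$. Applying Cauchy--Schwarz to $\sum_j w_j\epsilon_j^T\bigl(x(t_j)-x^{*,h}(t_j)\bigr)$ as an inner product over the $md$ coordinates gives the second factor $\bigl(\sum_j|x(t_j)-x^{*,h}(t_j)|^2\bigr)^{1/2}$, which is of order $D\sqrt{m}$, not $D$: you have silently dropped a $\sqrt{m}$. Carrying it, for regular sampling $w_j=T/m$ your bound becomes $\mathbb{E}\Phi\lesssim D M_\epsilon\sqrt{d}\,\sqrt{m}\cdot(T/\sqrt{m})=O(1)$, a constant that does not decay with $m$, so the argument cannot produce the leading term $K_1 d\sqrt{\sum_j w_j^2}$ and the theorem does not follow. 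This is precisely the point you flag as ``the genuine obstacle,'' but flagging it is not resolving it: a pointwise Cauchy--Schwarz cannot exploit the fact that the supremum is taken over a small class. The paper closes this step with Dudley's chaining inequality: the admissible trajectories are uniformly bounded and uniformly Lipschitz in $t$ (its Lemmas~2 and~3), hence their sup-norm covering numbers satisfy $\log\mathcal{N}(\mathcal{X},\varepsilon)\le L_7/\varepsilon$, the entropy integral $\int_0^{M}\varepsilon^{-1/2}\,d\varepsilon$ converges, and the increments of the process are sub-Gaussian with scale proportional to $\sqrt{\sum_j w_j^2}$ times the sup-norm distance, which yields $\mathbb{E}(\sup)\le L_8\sqrt{\sum_j w_j^2}$. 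Some uniform-entropy or chaining argument of this kind is indispensable here; without it your proof does not go through. (A minor additional imprecision: the per-interval error in the $L^2$ conversion is $O(w_j^2)$ from the Lipschitz bound, not $O(w_j^3)$, but this is harmless since it still sums to the $K_4d\sum_j w_j^2$ term.)
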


For a better understanding of Theorem \ref{MainTheo3}, assume a regular sampling of the interval $[0,T]$ with $m$ points, so that for every $j$, $t_{j+1}-t_j=\frac{1}{m}$. In that case, under the same hypothesis, for any $\epsilon>0$, with probability less than  $\exp{\left(\frac{-K_{2}m\epsilon^2}{d}\right)}$:

\begin{equation}
{||\hat{x}-x^{*}||}_{L_2}^2   \geq     \frac{K_1d}{\sqrt{m}} +\frac{K_{4}d}{m}+h^2K_{3}d+\epsilon. 
\end{equation}

A proof of Theorem \ref{MainTheo3} is provided in the appendix. We provide here a description of the main ideas. The third term in the right hand side of inequality \eqref{eq:MainIneq} corresponds to the  global truncation error between the numerical solution of the ODE and the true solution. The second term corresponds to the error between ${||\hat{x}-x^{*}||}_{L^2}^2$ and  $\frac{1}{m}\sum_{j=1}^m |x^*(t_j) - \hat{x}(t_j)|^2$. The first  term is the leading term, assuming that $h$ is always less than $\frac{1}{m}$. Assume that $\hat{x}$ solves the continuous-constraints optimization problem (without an Euler approximation), i.e:

\begin{equation}
\min_{x,f}\frac{1}{m}\sum_{j=1}^m|y_j - x(t_j)|^2,
\end{equation}
Under the constraints: ${||f-f_0||}_H \leq R$, $|x_0| \leq r$ and $x(t)=x_0+\int_{0}^{t}f(x(u))du,$ $\forall 0 \leq t \leq T$, we can then consider the ``generalization" error:
\begin{equation}
\frac{1}{m}\sum_{j=1}^m|x^*(t_j) - \hat{x}(t_j)|^2.
\end{equation}
An upper bound of this error is given by the first term. The main tool used to obtain the upper bound is Dudley's chaining inequality, see \cite{vershynin2018high}. We notice that for every $i=1,\dots,d$, the set of coordinate functions $x_i$, where $x$ and $f$ satisfy the constraints of the continuous problem, is included in a set of functions that are uniformly Lipschitz continuous and bounded (the Lipschitz constant and bound does not depend on $x_0$ and $f$). Upper bounds of covering numbers of such functions are well-known, see \cite{vershynin2018high}, hence the use of Dudley's inequality. \\
{One can easily transform the inequality on the probability of theorem \ref{MainTheo3} to an  inequality on $\mathbb{E}\left({||\hat{x}-x^{*}||}_{L_2}^2\right)$. Indeed, let us assume for simplicity a regular sampling of $m$ points the interval $[0,T]$. We denote by:
\begin{equation}
\hat{E}_{L_2}:={||\hat{x}-x^{*}||}_{L_2}^2-\frac{K_1d}{\sqrt{m}} -\frac{K_{4}d}{m}-h^2K_{3}d.
\end{equation}
Using theorem \ref{MainTheo3}, we have the following inequality:
\begin{align}
\mathbb{E}\left(|\hat{E}_{L_2}|\right) & = \int_{0}^{\infty}\mathbb{P}\left(|\hat{E}_{L_2}|\geq \epsilon\right) \\
                                       & \leq \int_{0}^{\infty}\exp{\left(\frac{-K_{2}m\epsilon^2}{d}\right)} \\
                                       &=\sqrt{\frac{{\pi}}{{4K_2}}}\sqrt{\frac{d}{m}}
\end{align}

This implies the following result.
\begin{cor}
Assume we have a regular sampling of $m$ points on the interval $[0,T]$. Then:
\begin{equation}
\label{eq:ExpectedTheo}
\mathbb{E}\left({||\hat{x}-x^{*}||}_{L_2}^2\right) \leq \sqrt{\frac{{\pi}}{{4K_2}}}\sqrt{\frac{d}{m}}+\frac{K_1d}{\sqrt{m}} +\frac{K_{4}d}{m}+h^2K_{3}d.
\end{equation}
\end{cor}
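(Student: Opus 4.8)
The plan is to convert the high-probability lower-tail bound of Theorem~\ref{MainTheo3} into a bound on the expectation via the layer-cake (tail-integral) formula. Specializing Theorem~\ref{MainTheo3} to the regular sampling $t_{j+1}-t_j = 1/m$, the centered quantity
\[
\hat{E}_{L_2} := \|\hat{x}-x^*\|_{L_2}^2 - \frac{K_1 d}{\sqrt{m}} - \frac{K_4 d}{m} - h^2 K_3 d
\]
satisfies, for every $\epsilon > 0$, the one-sided tail bound $\mathbb{P}(\hat{E}_{L_2} \geq \epsilon) \leq \exp(-K_2 m \epsilon^2 / d)$. Since the three subtracted quantities are deterministic, linearity of expectation gives $\mathbb{E}(\|\hat{x}-x^*\|_{L_2}^2) = \mathbb{E}(\hat{E}_{L_2}) + \frac{K_1 d}{\sqrt{m}} + \frac{K_4 d}{m} + h^2 K_3 d$, so it suffices to control $\mathbb{E}(\hat{E}_{L_2})$.

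First I would bound $\mathbb{E}(\hat{E}_{L_2})$ by the expectation of its positive part, $\mathbb{E}(\hat{E}_{L_2}) \leq \mathbb{E}((\hat{E}_{L_2})_+)$, since $\hat{E}_{L_2}$ may be negative while only its upper tail is controlled. Then I would apply the layer-cake identity $\mathbb{E}((\hat{E}_{L_2})_+) = \int_0^\infty \mathbb{P}(\hat{E}_{L_2} \geq \epsilon)\, d\epsilon$ and substitute the tail bound:
\[
\mathbb{E}((\hat{E}_{L_2})_+) \leq \int_0^\infty \exp\!\left(-\frac{K_2 m \epsilon^2}{d}\right) d\epsilon .
\]
This is a Gaussian integral; using $\int_0^\infty e^{-b\epsilon^2}\,d\epsilon = \tfrac{1}{2}\sqrt{\pi/b}$ with $b = K_2 m / d$ yields the value $\sqrt{\frac{\pi}{4K_2}}\sqrt{\frac{d}{m}}$. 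Combining this with the linearity identity above produces exactly \eqref{eq:ExpectedTheo}.

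The derivation is essentially routine once Theorem~\ref{MainTheo3} is available, and the remaining points require care of a bookkeeping rather than conceptual nature. The main subtlety is that Theorem~\ref{MainTheo3} furnishes only a one-sided bound, so one must pass through the positive part $(\hat{E}_{L_2})_+$ rather than $|\hat{E}_{L_2}|$ as written in the informal sketch preceding the corollary. The tail integral converges because the Gaussian tail is integrable on $[0,\infty)$, which simultaneously guarantees that $\|\hat{x}-x^*\|_{L_2}^2$ is integrable so that the expectation is well defined. I would also verify that the constants $K_1,\dots,K_4$ are precisely the regular-sampling constants appearing in the specialized form of Theorem~\ref{MainTheo3}, so that the deterministic terms match term by term.
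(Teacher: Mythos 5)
Your proposal is correct and follows essentially the same route as the paper: define the centered quantity $\hat{E}_{L_2}$, apply the tail-integral formula, and evaluate the resulting Gaussian integral to get $\sqrt{\pi/(4K_2)}\sqrt{d/m}$. In fact your version is slightly more careful than the paper's, which writes the layer-cake identity for $\mathbb{E}(|\hat{E}_{L_2}|)$ even though Theorem~\ref{MainTheo3} only controls the upper tail; passing through the positive part $(\hat{E}_{L_2})_+$ as you do is the clean way to justify the step.
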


}
{ To illustrate the inequality in \eqref{eq:ExpectedTheo}, we conducted a simple toy experiment where the conditions of the theorem are satisfied, and evaluated the convergence rate. In this experiment we considered a one-dimensional autonomous system. We randomly initialized the weights of a function determined by 200 Fourier random features, recorded the norm of the function, and generated a trajectory of 5120 samples using this function. Then we took ten independent and identically distributed random samples of noise with a standard deviation of .05. This provided us with 10 noisy trajectories of 5120 samples (of the same trajectory but different samples of noise). Finally, we sub-sampled each of these ten noisy trajectories to get 2560 samples, 1280 samples, ... all the way down to 5 samples. This gave us 10 training sets, each with 5, 10, 20, 40, ..., 5120 samples. We trained the algorithm on each of these datasets and reported the average $L_2$ (squared) error between the estimated trajectory and the true one over the ten trajectories at each level of sparsity. In figure \ref{fig:Convergence}, we provide a plot of the log of the average $L_2$ (squared) errors as a function of the log of the number of samples used during training. Equation \eqref {eq:ExpectedTheo} predicts a slope at least $-.5$. We fit the data to a line of slope $-.8$, consistent with \eqref {eq:ExpectedTheo}. We provide a plot with a line of slope $-.5$ for comparison.}
\begin{figure}[htbp]
\label{fig:Convergence}
\includegraphics[width=.5\textwidth]{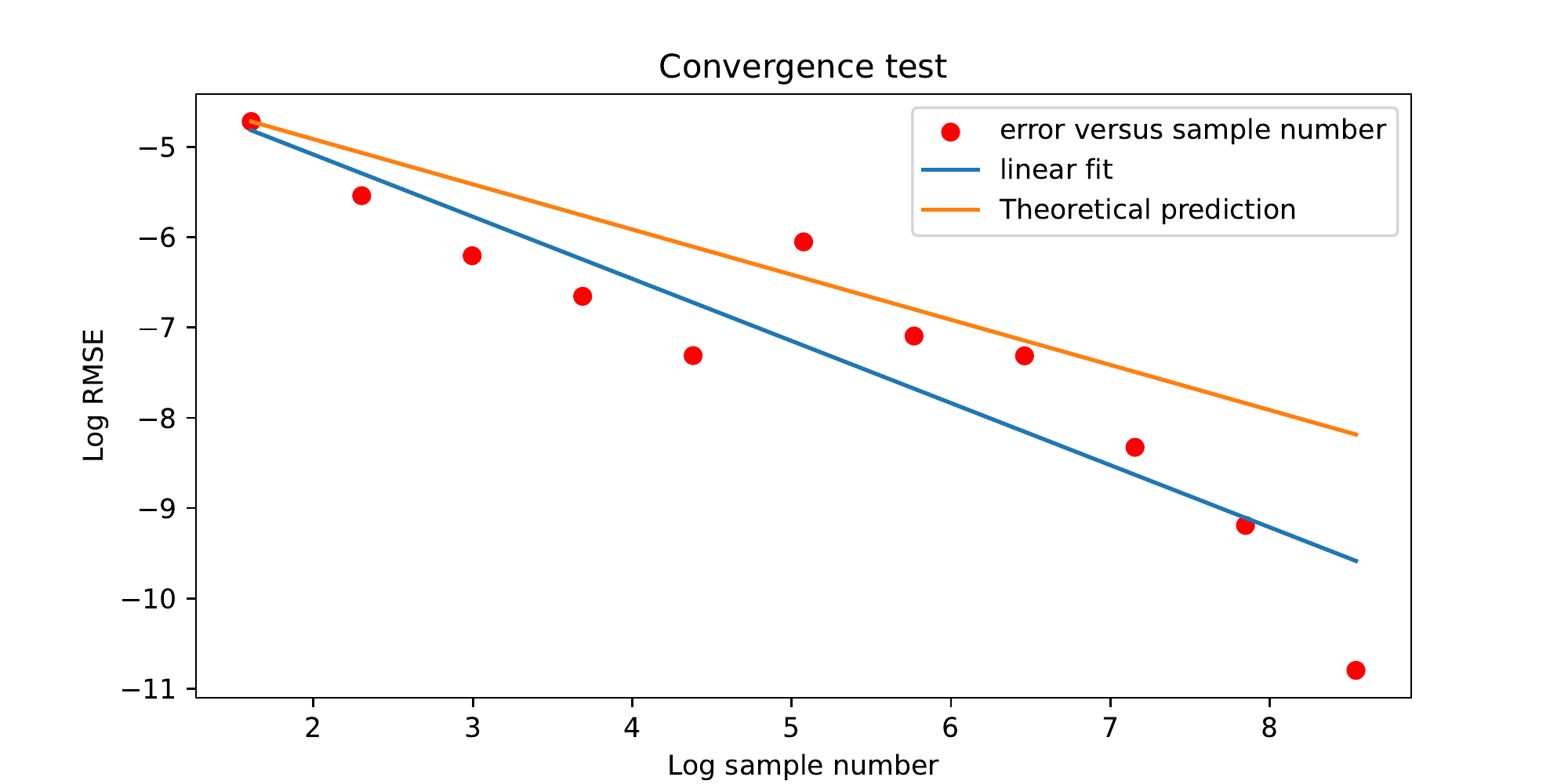}
\includegraphics[width=.5\textwidth]{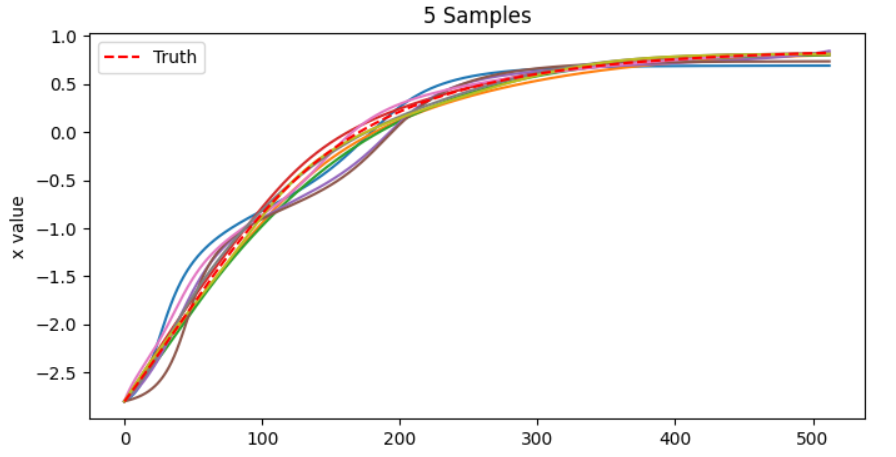}
\caption{On the left we plot the log of the average $L_2$ squared error between the true trajectory and the estimated one as a function of the log of the number of samples. A linear regression yields a slope of $-.8$ indicating convergence at a rate between $\frac{1}{\sqrt{m}}$ and  $\frac{1}{m}$. On the right we plot the predicted trajectories when we use have 5 observations, together with the true trajectory (in the dotted line).}
\end{figure}

\section{Experiments}
We report experiments for simulated data as well as for real data. In each case, we compare the performances of the proposed algorithm, generically named ODE-RKHS, with seven other algorithms. These algorithms constitute, up to our knowledge,  the current state of the art  for learning nonparametric ODEs from noisy data. We briefly review these algorithms and provide references below.  
\begin{enumerate}

\item \emph{Nonparametric Ordinary Differential Equations:}
Nonparametric Ordinary Differential Equations (npODE) is presented in \cite{heinonen2018learning}. The authors use a Bayesian model with Gaussian processes (GP). It is the Bayesian counterpart of the frequentist model presented in this paper. Unlike GP regression where the optimization can be computed in closed form, an approximate optimization method is required. The authors use inducing points, see \cite{quinonero2005unifying} and sensitivity equations, see \cite{kokotovic1967direct}. The npODE code was downloaded from \url{http://www.github.com/cagatayyildiz/npode} in February 2021. {Given the normalized trajectory sets, we ran the algorithm with a scale factor of 1 and an $\ell_0$ of 1. For the 2D systems, we used a width of the inducing point grid $W=6$, matching the demonstration examples. For the 6D Lorenz96, we encountered out-of-memory errors for $W>2$, possibly indicating an empirical scaling issue with the method. We thus used $W=2$ for this system.}

\item \emph{Sparse Identification of Nonlinear Dynamics (Fourier and Polynomial Candidate Functions):}
Sparse Identification of Nonlinear Dynamics (SINDy) is a highly cited technique for identifying nonlinear dynamics from data, see\cite{brunton2016discovering}. SINDy predicts governing dynamics equations using gradient matching via sparse regression.  In the experiments shown, we test SINDy with two different libraries of possible functions: polynomials up to order three and Fourier features. We choose the SR3 sparsity regularization for its superior performance, detailed in \cite{zheng2018unified},
 which has a threshold value as a hyperparameter. Other hyperparameters in our tests include the polynomial library's degree and the size and lengthscale of the Fourier features library. A grid search tuner was employed to determine the best hyperparameter values, with the same holdout and evaluation sets as in the competing algorithms. pySINDy v1.6.3 was used for the implementation \cite{desilva2020}. We use the AutoKoopman library to tune the hyperparameters, described in \cite{Lew2023}.

\item \emph{Extended Dynamic Mode Decomposition:}
The Koopman operator is an infinite dimensional linear operator that captures the dynamics of
a non-linear dynamical system.
Dynamic Mode Decomposition (DMD), described in \cite{schmid2010dynamic}, can approximate the Koopman operator's eigenvalues and eigenvectors based on observations of the system state. Extended DMD (EDMD) generalizes to nonlinear systems learning
 by approximating the Koopman operator in a high-dimensional space of observables, see \cite{williams2015data}. These observables must be selected before using EDMD, and can be chosen ad-hoc or by using library learning methods \cite{yeung2019learning}. We use random Fourier features as the observable functions for these experiments, specified in \cite{degennaro2019scalable}. We use the AutoKoopman library to tune the hyperparameters via Bayesian optimization{, available at \url{https://github.com/EthanJamesLew/AutoKoopman}.}

\item  \emph{Kernel Analog Forecasting:}
Analog forecasting is a time series prediction method that utilizes the idea of analog forecasting that follows the evolution of a historical time series that most closely matches the current state.  Kernel analog forecasting (KAF) replaces single-analog forecasting with weighted ensembles of analogs constructed using local similarity kernels that employ several dynamics-dependent features designed to improve forecast skill \cite{zhao2016analog} \cite{burov2021kernel}.  Our KAF implementation {is based on  \url{https://github.com/rward314/StreamingKAF}.} Hyperparameters are the kernel function and rank used for the number of eigenvalues found from the data-defined kernel matrix. We selected a Gaussian kernel and grid tuned for rank and kernel lengthscale. {We use the same eigenvalue multiplier of $10^{-4}$ as the referenced code.}

\item  \emph{Sparse Cyclic Recovery:}
We implement the method formulated in \cite{schaeffer2020extracting} well-suited for the experiments as it is designed for learning structured dynamical systems from under-sampled and possibly noisy state-space measurements. For index invariant systems, the method generates cyclic permutations to augment the training data. Then, it builds a library of Legendre polynomials of candidate functions and does basis pursuit with thresholding to recover the dynamics. The hyper-parameters involved are the parameters for the Douglas-Rachford algorithm used to solve the Legendre basis pursuit (L-BP) problem and the Legendre polynomial degree;  we tune these parameters via grid search. {We referenced the parameters used in their GitHub project \url{https://github.com/linanzhang/SparseCyclicRecovery}. We utilize the same candidate functions as the paper, but tune the noise threshold $\sigma$ and the $\mu$, $\tau$ parameters of the optimizer. Because of compute effort limitations, we set the maximum number of optimization iterations to $10^4$.}

\item \emph{Gradient descent via optimal control:}
We implemented a gradient descent algorithm based on the co-state equations derived from optimal control theory, see \cite{pontryagin1987mathematical}. Specifically, we compute the gradient of the likelihood function under the constraints provided by the Euler discretization of the ODE. This optimization under equality constraints is performed using a co-state as explained in \cite{JMLR:v21:18-415}. The algorithm effectively implements a backpropagation algorithm in a deep neural network with parameters shared among all layers, see also \cite{chen2018neural}. 

\end{enumerate}

The Amyloid data is presented in section \ref{sec:Amyloid}. This dataset has motivated the creation of the ODE-RKHS algorithm. A smooth vector field, many trajectories, and few sparse and noisy observations per trajectory characterize it. 

\subsection{Selection of the hyper-parameters for the  ODE-RKHS algorithm}
We first select the parameter $h$, the time discretization. A smaller $h$ provides better accuracy at the cost of a linear increase in computational time. Next, we select the parameter $\gamma^{(0)}$ small enough such that the data term in \eqref{eq:JEuler Multi} would be the dominant term. Finally, we performed a grid search for the parameters $\lambda$ and $\rho$, using a validation set consisting of 20\% of the available data in each case. 

\subsection{Oscillator data} 
\label{sec:oscilator}
The FitzHugh-Nagumo (FHN) oscillator data is a controlled experiment with known and easy-to-visualize 2D trajectories. It has helped calibrate the algorithm described in this paper. It was also demonstrated in \cite{heinonen2018learning} for the npODE algorithm. We ran experiments using a simulated dataset generated as follows:
\begin{equation}
\begin{split}
\dot{v} &= v - v^3/3 - w + 1\\
\dot{w} &= 0.08(v+0.7-0.8w)
\end{split}
\end{equation}
Intermediate and final results of the ODE-RKHS algorithm are presented in Fig. \ref{fig:run} for the FHN data. Notice that during the first steps, shown on the top line, the estimated trajectories with solid color lines are rough but fit the data closely. During the later steps, shown on the bottom line, the trajectories are smoother but still fit the data.  
\begin{figure}[htb!]
    \centering
    \includegraphics[width=0.48\textwidth]{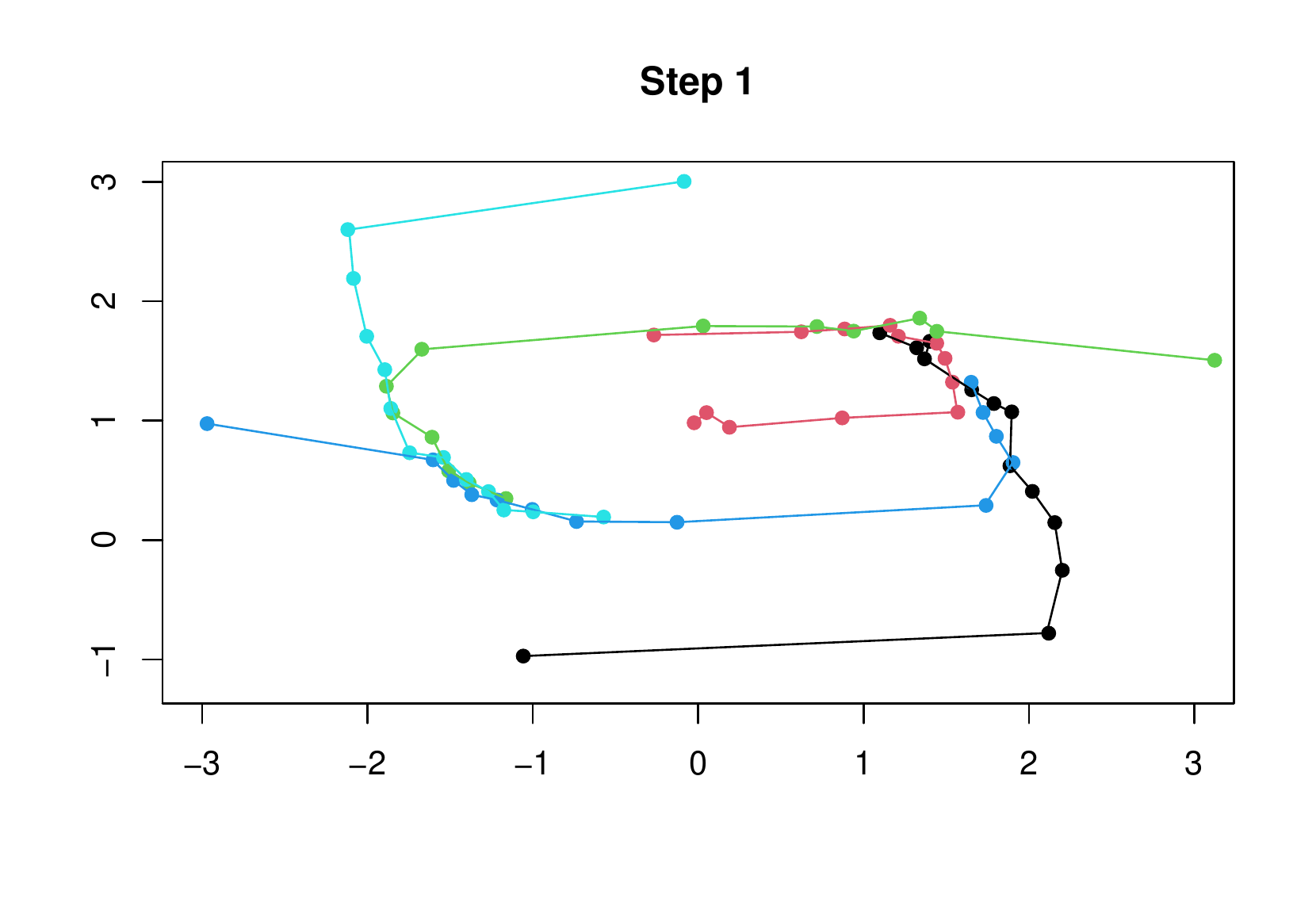}
    \includegraphics[width=0.48\textwidth]{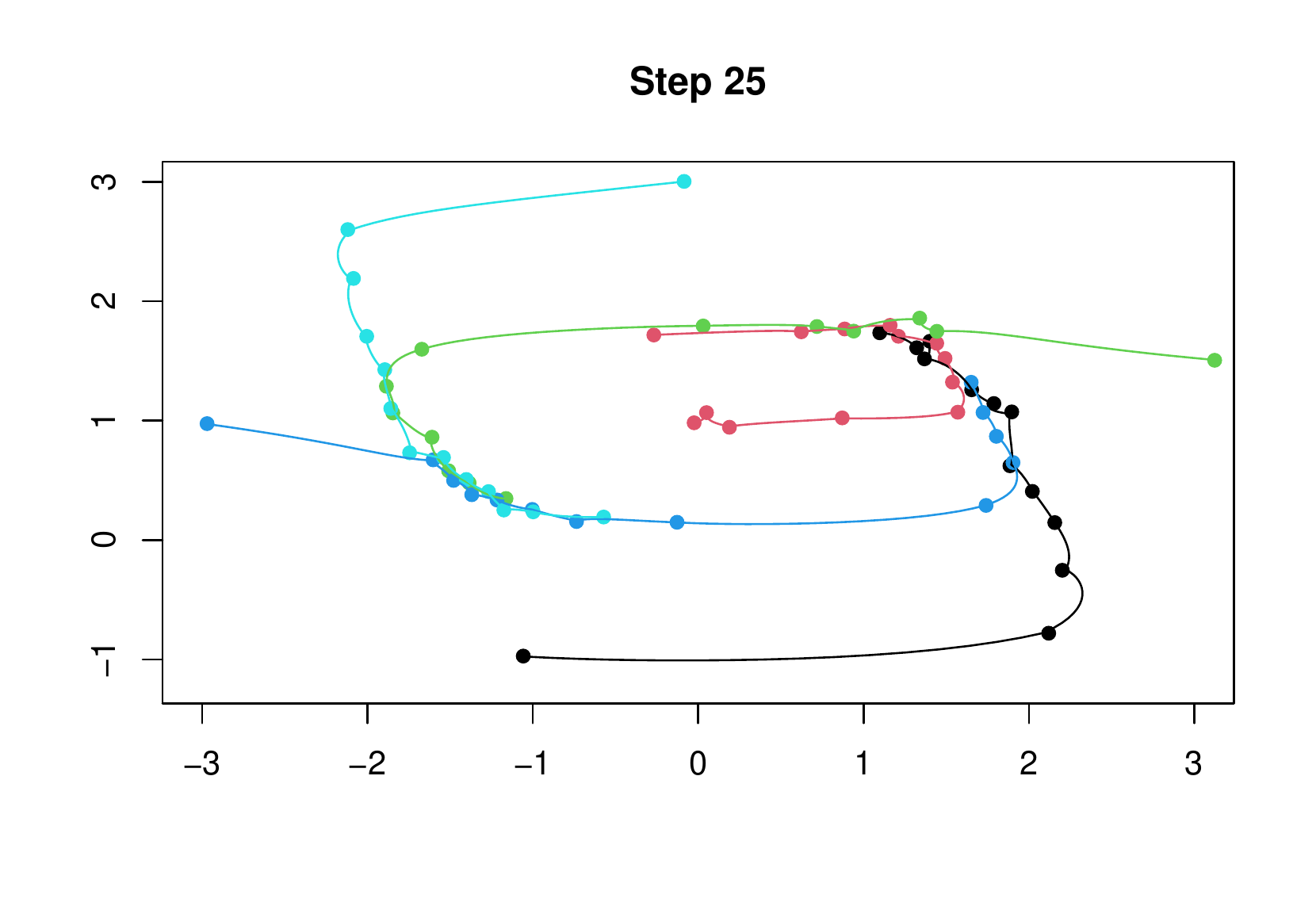}
    \includegraphics[width=0.48\textwidth]{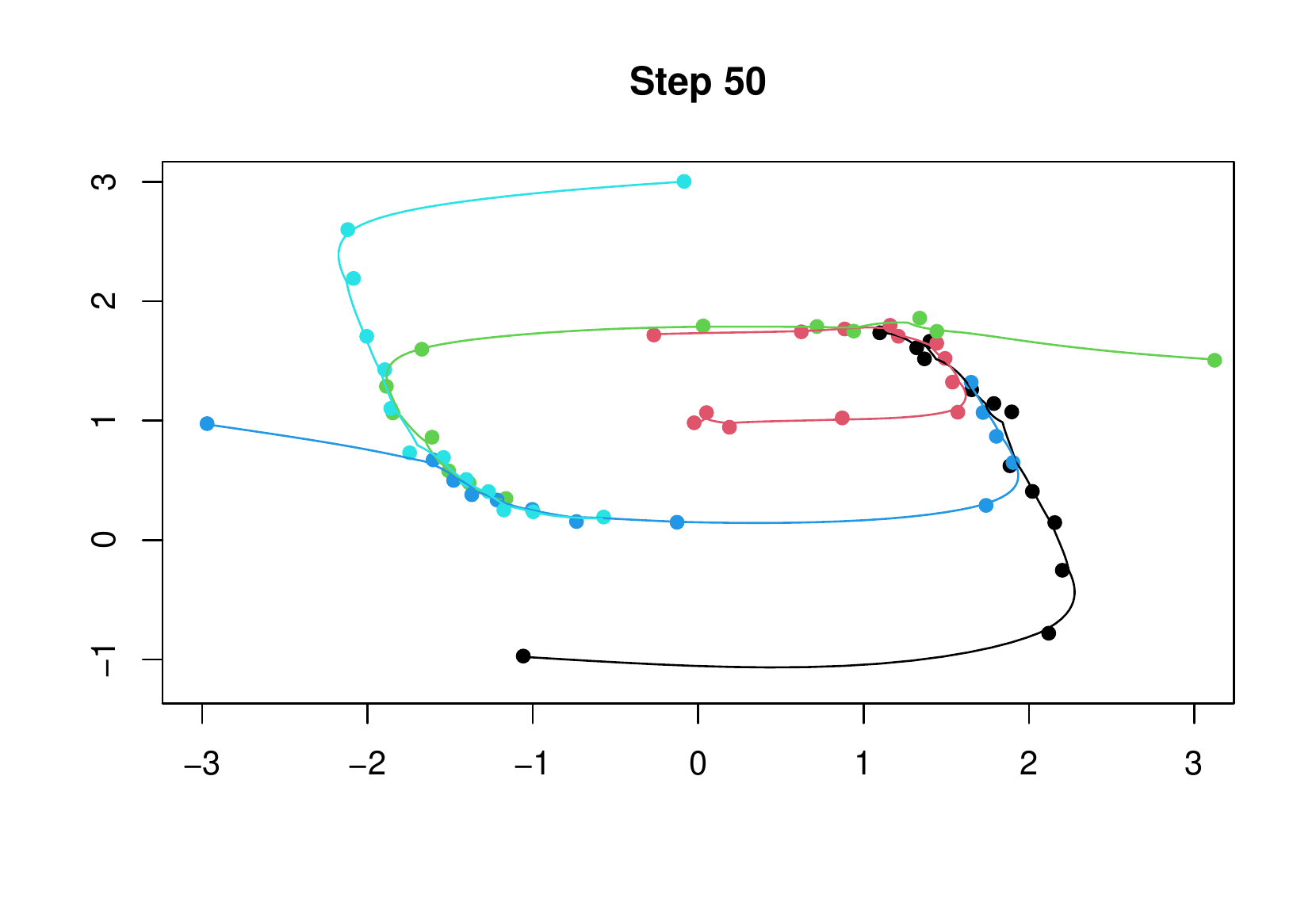}
    \includegraphics[width=0.48\textwidth]{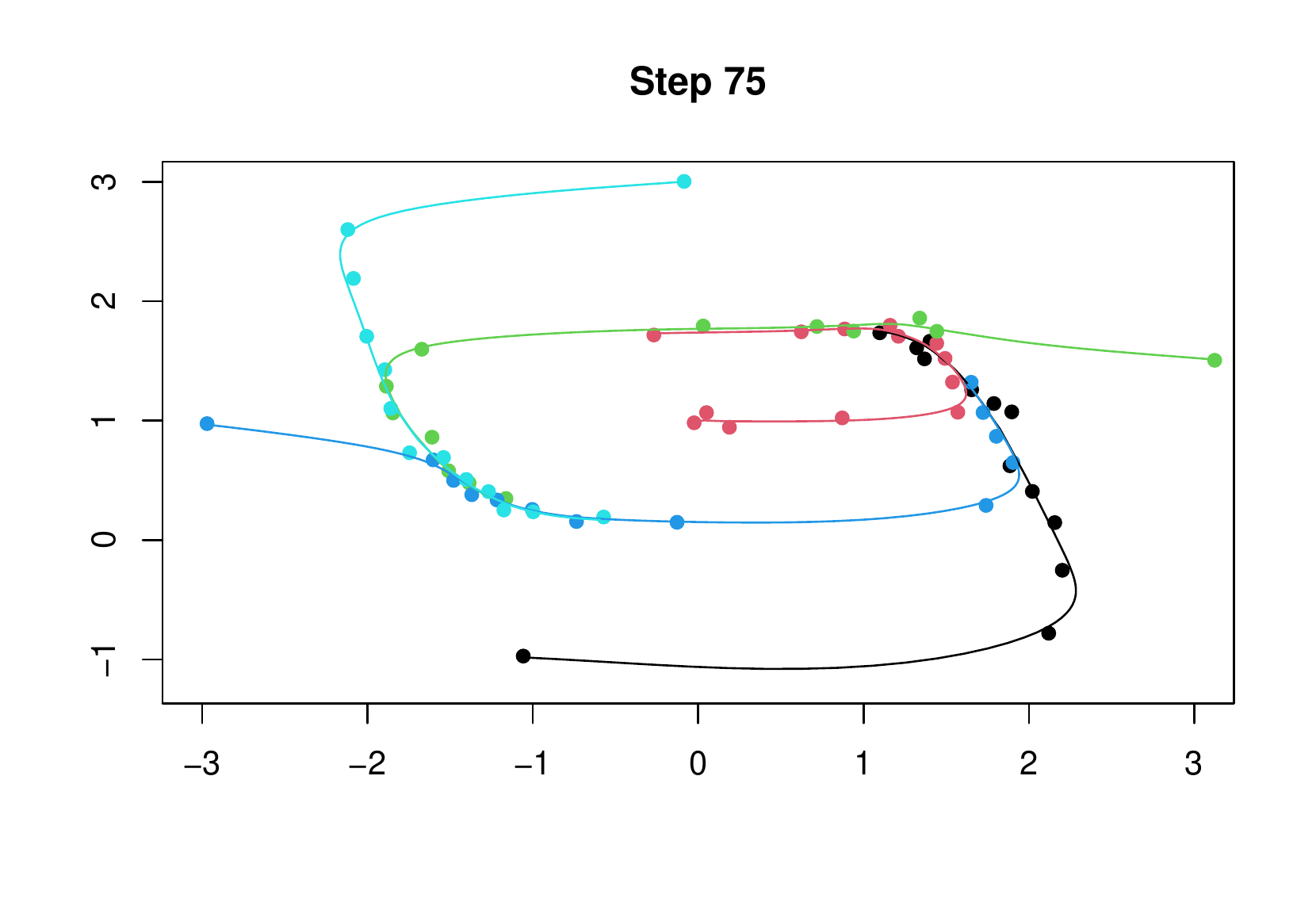}
    \caption{Illustration of the ODE-RKHS Algorithm:  The dots show the observations. The estimated trajectories are shown with lines and curves with corresponding colors. Steps $i$=1,25,50, and 75 are shown from left to right and from top to bottom   }
    \label{fig:run}
\end{figure}

We generated a set of 50 noiseless trajectories.  There were 201 observations per trajectory, one for each $.1$ increment in time.  To generate the training sets, we added samples of Gaussian noise to these fifty trajectories.  There were five levels of noise, with respective standard deviation $\sigma \in \{0.120, 0.365, 0.610, 0.855, 1.100\}$. Next, we generated a single test set of 100 trajectories without noise, again with 201 observations per trajectory separated by $.1$ time increments. Testing consisted of computing predicted trajectories starting at the initial condition of the test trajectories and computing the following error measurement  
\begin{equation}
\label{eq:Err}
    \text{Err} = \sqrt{\sum_{i=2}^{n}(t_{i}-t_{i-1})\|y_i - \hat{y}_i\|^2}
\end{equation}
where $t_i$ refers to the $i^{th}$ observation time, $y_i$ to the $i^{th}$ observation of the test trajectory, $\hat{y}_i$ to the $i^{th}$ point of the predicted trajectory and $n$ to the number of observations in the trajectory.  
We ran experiments with the same training, validation and test sets for all the algorithms. The results are summarized in figure \ref{fig:FHN_results}. The graph shows the average Err over the five datasets corresponding to each level of noise.  The error bars are the standard error of the mean of Err for each of the five datasets. Overall, the performances decrease with increased level of noise, as expected. We noticed that in most cases, ODE-RKHS or npODE are the best performing algorithms. The FHN is a polynomial system. This might explain why SINDy polynomial (in red) performs better than SINDy Fourier (in green).  The gradient descent algorithm is constantly in the higher range of performances.  EDMD did somewhat poorly, as did KAF.        
\subsection{Lorenz data} 
\label{sec:Lorenz}
Our next experiment was on the Lorenz system defined by the equations
\begin{equation}
\begin{split}
\dot{x} &= 10(y-x)\\
\dot{y} &= x(28-z) - y\\
\dot{z} &= xy -\frac{8}{3}z
\end{split}
\end{equation}
We generated 50 noiseless trajectories with 201 observations per trajectory, each separated by a $0.01$ increment in time.  Next, we generated samples of Gaussian noise with levels $\sigma \in \{0.5,1.2,1.9,2.6,3.3\}$.  We generated five noise samples for each noise level and added these to the noiseless trajectories to generate the training sets.  Then we generated a single test set consisting of 100 trajectories, each with 201 observations at $0.01$ time increments.

Err was measured only for the first $0.2$ units of time.  This is because we found that the predicted trajectories for all methods diverged from the true trajectories at about this time.  The results are summarized in figure \ref{fig:exp_error_plots}.  The values are the average of Err over the five datasets corresponding to each level of noise.  The error bars are the standard error of the mean of Err for the five datasets at each noise level.

EDMD performed the best, followed by ODE-RKHS.  SINDy with polynomials did well on the low-noise settings.  This could be because the Lorenz system is a polynomial system.  KAF, SINDy with Fourier features, and gradient descent all did poorly.  The method npODE seemed to break as the noise increased.  L-BP also performed well on the low-noise settings, but struggled when the noise was increased.
{
\subsection{Lorenz96}
The Lorenz96 data arises from \cite{75462}. The chaotic system is defined for $n=6$ dimension by:
\begin{equation}
\dot x_k = -x_{k-1}x_{k-1}+x_{k+1}x_{k-1}-x_k+F , k=1 \ldots 6
\end{equation}
We have selected $F=8$. Indices wrap-around so that $x_{-1} = x_{6}$ and $x_7=x_1$.

The performances of the proposed method, as well as the six comparators, are presented in figure \ref{fig:exp_error_plots}. The ODE-RKHS performs well or better than the comparative methods with noisy data. There is no comparative method that is better uniformly among these three test cases.

}
\begin{figure}[!htb]
    \centering
    \subfigure[FHN]{
    \includegraphics[width=0.4\textwidth]{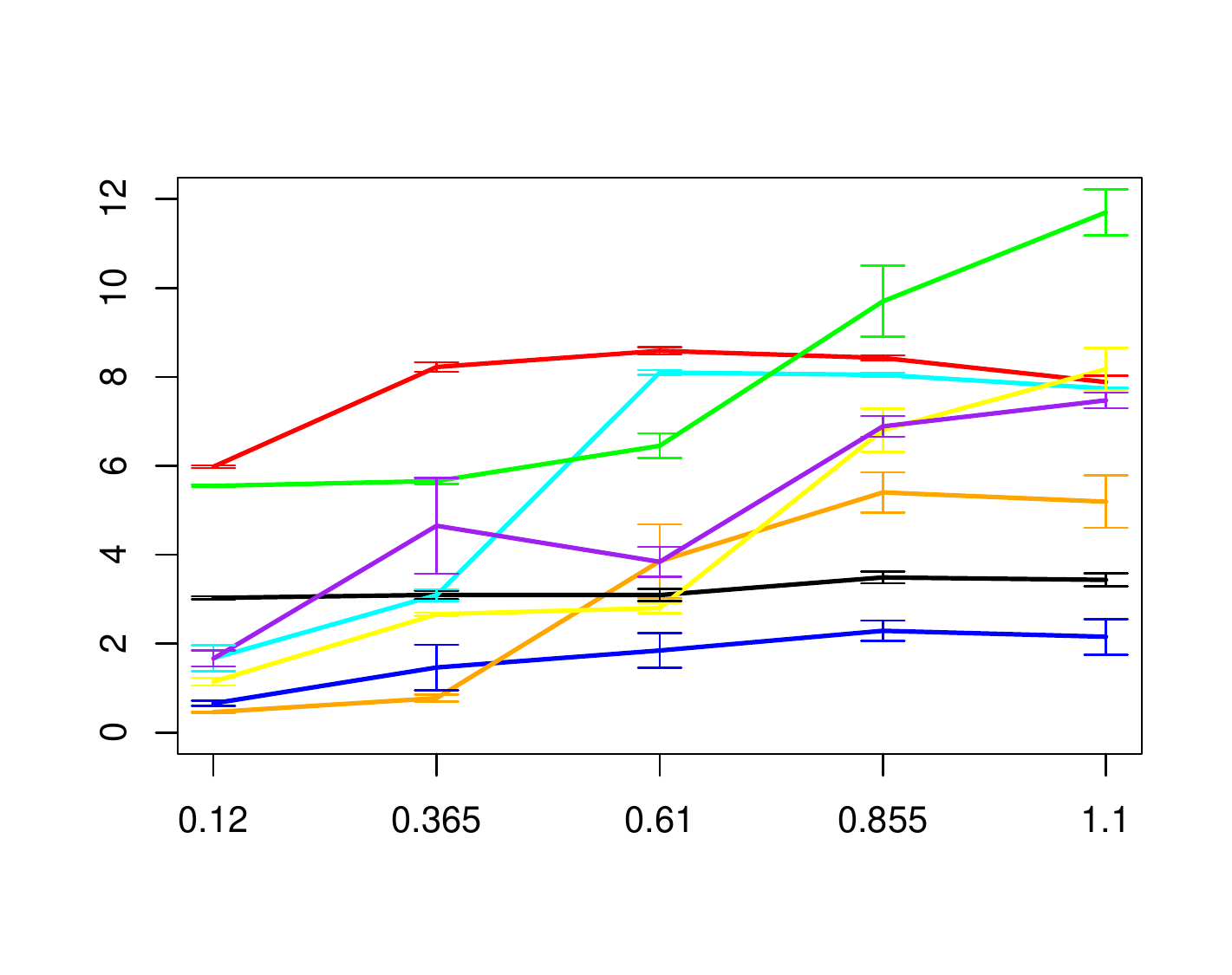}
   }
    \subfigure[Lorenz63]{
    \includegraphics[width=0.4\textwidth,height=.32\textwidth]{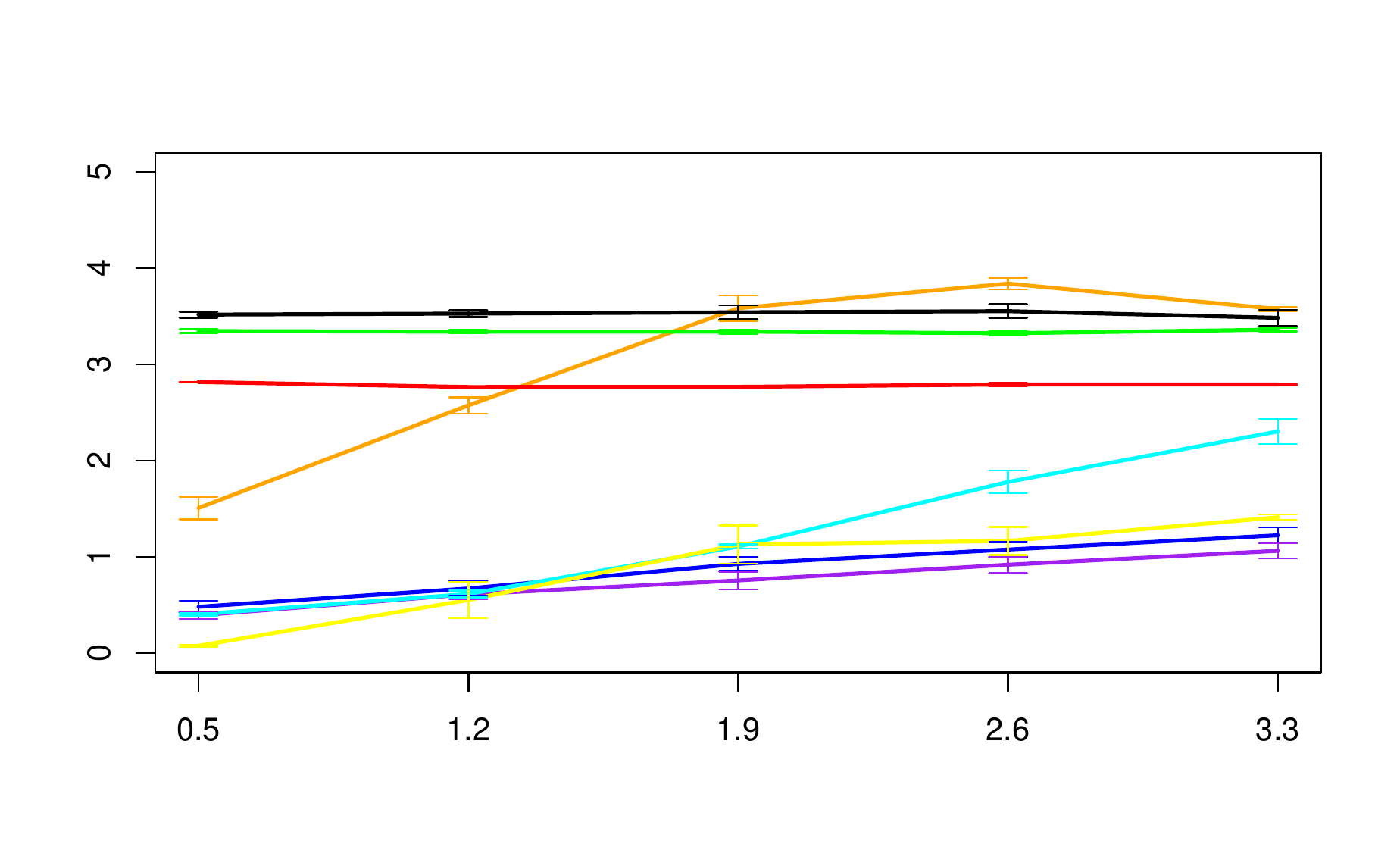}}\\
    \subfigure[Lorenz96-6]{
    \includegraphics[width=0.4\textwidth,height=.4\textwidth]{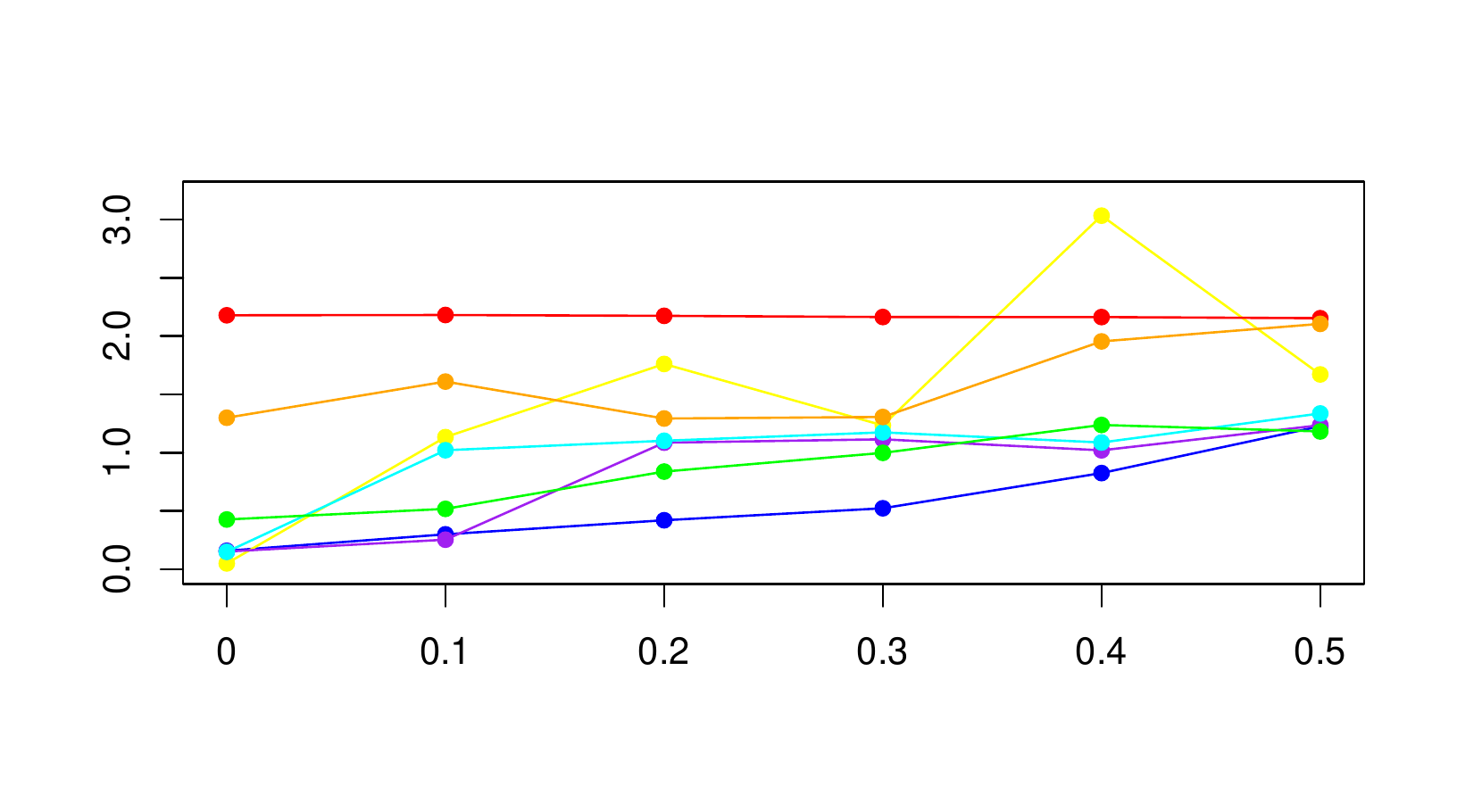}}
    \subfigure[Legend]{
    \includegraphics[width=0.4\textwidth,height=.4\textwidth]{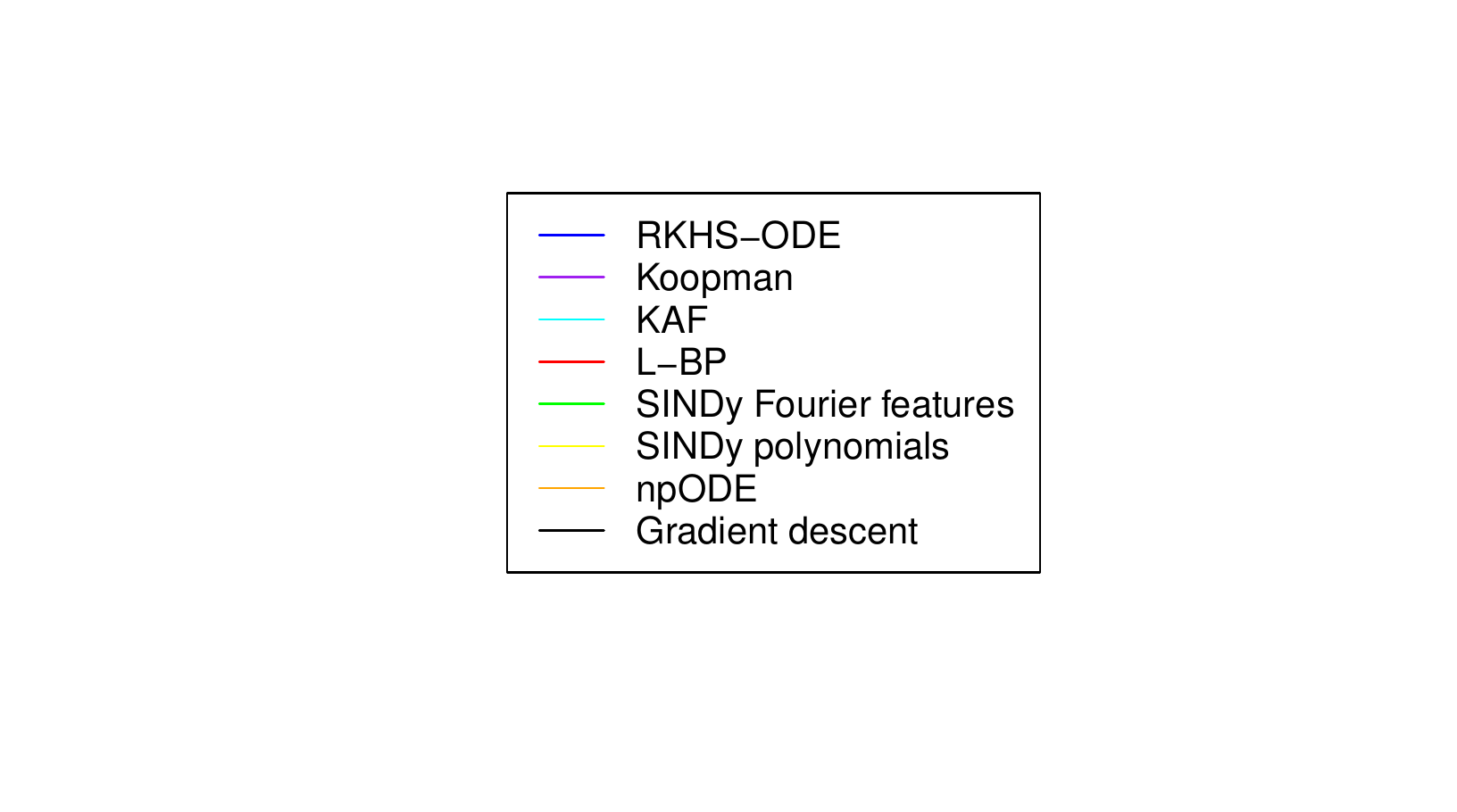}}
    
    \caption{{Analysis of the performances of the RKHS-ODE algorithm on the simulated data. $x$-axis: standard deviation of the Normal (Gaussian) centered noise added to the data. $y$-axis: mean squared difference between the true (noiseless) and estimated trajectories, where the mean is taken over the time points of the true trajectory (averaged over five independent datasets for FHN and Lorenz63).
     } }  
    \label{fig:exp_error_plots}
\end{figure}

\subsection{The accumulation of Amyloid in the cortex of aging subjects}
\label{sec:Amyloid}
The accumulation of Amyloid in the brain is believed to be one of the earliest pathological mechanisms of Alzheimer's disease, beginning more than a decade before the onset of clinical symptoms, see \cite{murphy2010alzheimer}. 

 Based on observations from several longitudinal Amyloid positron emission tomography (PET) studies, it is believed that the rate of Amyloid accumulation is closely associated with the level of Amyloid at the same age, see \cite{vernhet2020modeling}. We develop a principled mathematical model capturing this phenomenon and use it to predict the accumulation of Amyloid across individuals longitudinally. 
 
 We used (PiB) PET scans from the Wisconsin Registry for Alzheimer’s Prevention (WRAP) to assess global Amyloid burden, measured by the Distribution Volume Ratio (DVR)\footnote{The data used for this experiment has been obtained from the Wisconsin Registry for Alzheimer's Prevention. See \url{https://wrap.wisc.edu/}. A request for accessing this data can be initiated from this website.}. The number of subjects in this study is $n=179$, with $3.06$ visits on average, over an average span of 6.84 years. We fit the model in \eqref{eq:model} to the posterior cingulum, precuneus and gyrus rectus DVRs, averaging the left and right DVR in each case. These regions are known to show Amyloid accumulation early in the disease process. We use the Multi Trajectories Penalty method for ODE-RKHS described in Alg. \ref{alg:ODE-RKHS Multi} with $d=3$, and a Gaussian kernel. For each coordinate, we chose a bandwidth equal to 20\% of the range of the data.
 The time step used was $h=.1$ years.  We set $\gamma = 1$ and fit $\lambda,\rho$ using a validation set consisting of 20 percent of the training data. We set a maximum of $S=500$ iterations and used the early stopping criterion of stopping when the ratio $||f^{(s+1)}-f^{(s)}||/||f^{(s)}||$ was less than $\epsilon = 10^{-3}.$  Initialization of $f_0$ was done via gradient matching, as in \eqref{eq:GradientMatching}.  Figure \ref{fig:Amyloid} provides a visualization of the trajectories estimated using RKHS-ODE super-imposed (same color) with the data. This shows that the estimated trajectories are qualitatively accurate.     
We set aside 25 percent (rounded) of the data for testing.
Prediction was performed using Euler integration starting at the first observed time-point for this subject.  We computed the error for every subject as in the FHN experiment.

We compared with the predictions obtained with the other algorithms in table \ref{tab:amyloid}. We found that ODE-RKHS, SINDy polynomial and Fourier, and Gradient descent performed comparably for this data, while EDMD and npODE are not as accurate. npODE performed very well for the FHN data and but it is the worst performing here. A possible explanation is the dimension of the problem, here 3 instead of 2 for the FHN. It might be that some fine tuning of the npODE algorithm, for example increasing the number of inducing points would increase the performance. 

ODE-RKHS performs consistently among the best algorithms.

\begin{figure}[!htb]
    \centering
    \includegraphics[width=0.31\textwidth]{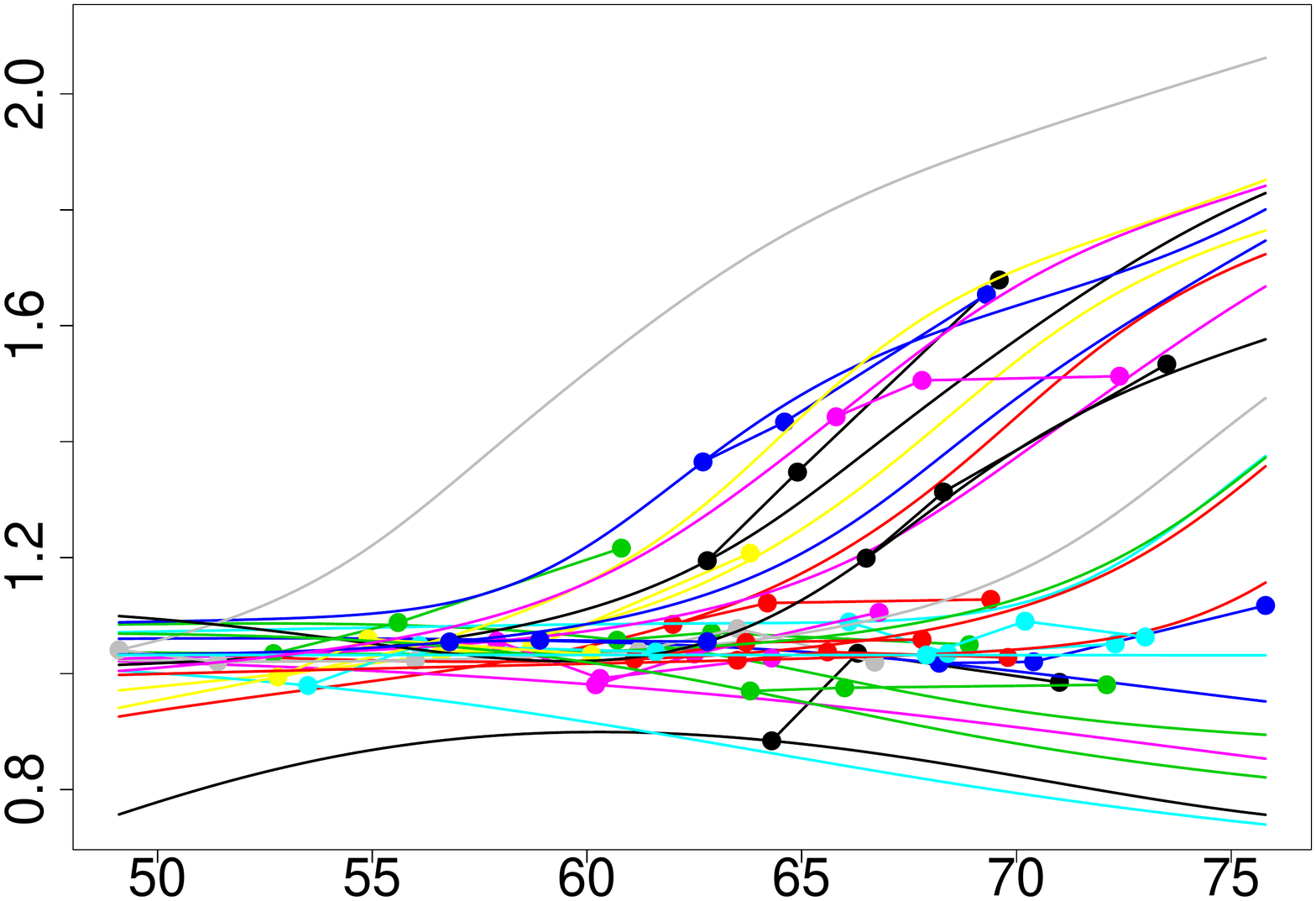}
    \includegraphics[width=0.31\textwidth]{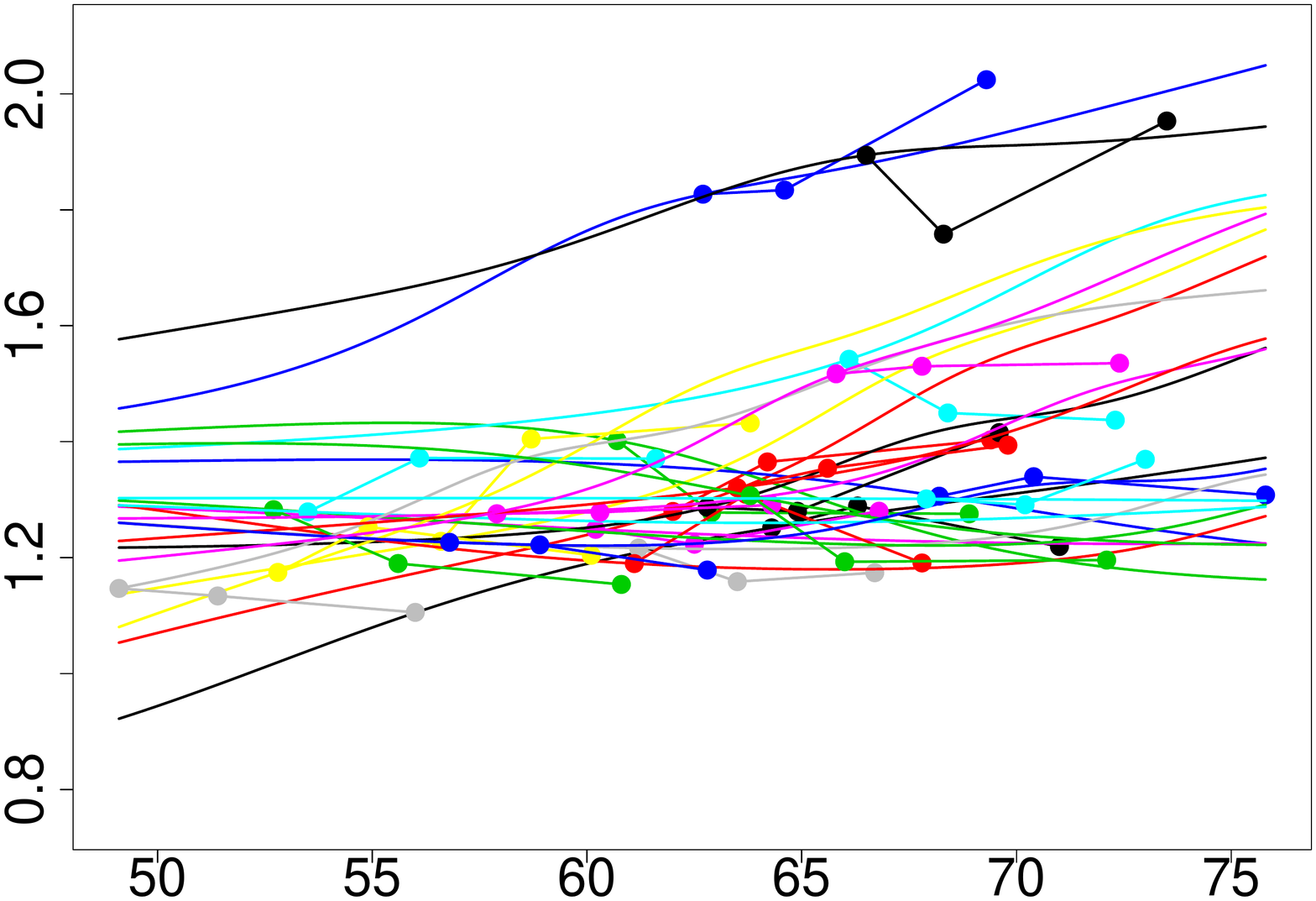}
    \includegraphics[width=0.31\textwidth]{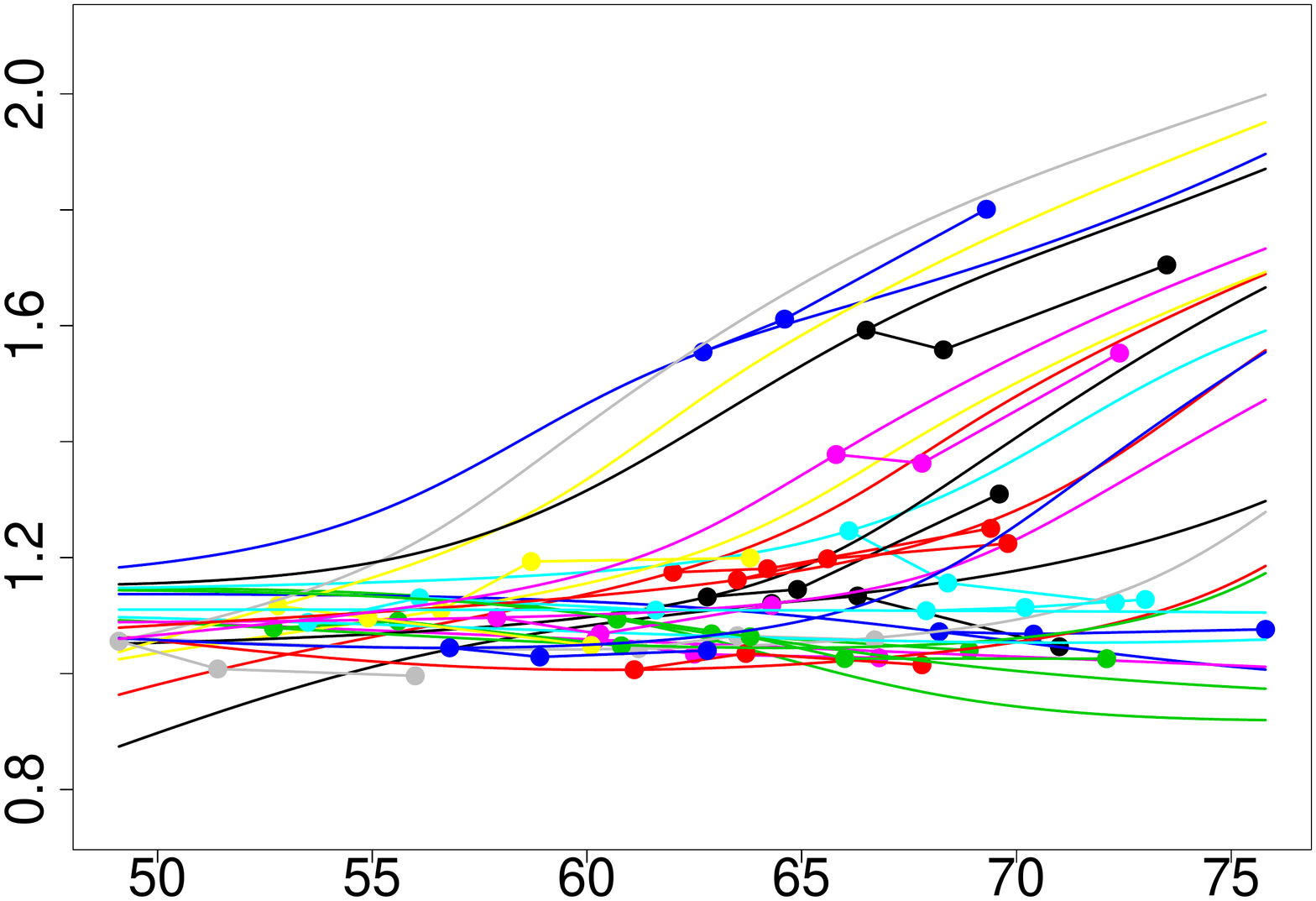}
    \caption{Amyloid prediction experiment. 
    Horizontal axis is in years. Vertical axis corresponds to DVR.  The left-most image corresponds to the gyrus rectus, the middle to the cingulum and the right to the precuneus. }  
    \label{fig:Amyloid}
\end{figure}
\begin{table}[htb]
\begin{center}
\begin{tabular}{lc}
Alg & Error   \\\hline
ODE-RKHS & {\bf 0.34$^{***}$}
\\\hline
npODE & 0.59
\\\hline
SINDy polynomial & {\bf 0.34$^{***}$}
\\\hline
SINDy Fourier & {\bf 0.37$^{***}$}
\\\hline
Gradient descent & {\bf 0.34$^{***}$}
\\\hline
EDMD & 0.53$^*$\\
\hline
KAF & 0.84\\
\hline
L-BP & 0.40$^{**}$\\
\hline
\end{tabular}
\end{center}
\caption{\label{tab:amyloid}Results for Amyloid data.  Stars indicate the number of methods the result is significantly better than as measured by the Wilcoxon signed-rank test at $\alpha=5\%$.}
\end{table}

\section{Discussion}
\label{sec:discussion}
We proposed an algorithm for learning non-parametric ODEs assuming that the function $f$ generating the vector field in $\RR^d$ belongs to a vector-valued RKHS with a kernel satisfying certain regularity conditions. The data input of the algorithm consists of noisy observations at different times of multiple trajectories. The algorithm is linear in the number of observations but cubic in their dimension. We proved the consistency of the estimated trajectory, showing that the $L^2$ squared distance between the estimated trajectory and the true one vanishes as more observations are collected. We assessed the algorithm with simulated and real data and obtained results that consistently compare favorably with the state of the art on a wide range of noise levels. 
\section{Acknowledgements}
The work at Portland State University was partly funded using the National Institute of Health RO1AG021155, R01EY032284, and R01AG027161, National Science Foundation  \#2136228, and the Google Research Award ``Kernel PDE''. The funding sources had no involvement in the study design; in the collection, analysis, and interpretation of data; in the report's writing; and in the decision to submit the article for publication.  The material of Galois, Inc. is based upon work supported by the Air Force Research Laboratory (AFRL) and DARPA under Contract No. FA8750-20-C-0534. Any opinions, findings, conclusions, or recommendations expressed in this material are those of the author(s). They do not necessarily reflect the views of the Air Force Research Laboratory (AFRL) and DARPA.

\appendix

\section{Consistency of the estimator of the trajectory}
\label{sec-supp:proof}

\subsection{Assuming we solve the problem without Euler approximation}

This section gives the proof of the theorem presented in section 3 of the main text. We present the proof for $d=1$ since the generalization to multiple dimensions is straightforward. We also present the proof for the case of autonomous systems. 
Keeping the notations of the main text, we make the following assumptions:
 
\begin{itemize}
\item $\mathbf{A_1}$: There exist an $f^* \in H, {||f^*-f_0||}_H \leq R$ and ${|x_0^*|} \leq r$ such that $x^*(0)=x_0^*$ and $\dot{x}^*(t)=f^*(x^*(t))$ for every $0 \leq t \leq T$.
\item  $\mathbf{A_2}$: The noise variables $\epsilon_{j}$ are independent and  bounded by a constant $M_{\epsilon}$, with a variance denoted by $\sigma^2$. (We can assume that the variables are subgaussian instead of bounded if we want to generalize this result)
\item  $\mathbf{A_3}$: The kernel $K$ is $\mathcal{C}^2(\mathbb{R})$ in its first argument (this implies that it is also $\mathcal{C}^2(\mathbb{R})$ in its second argument).
\item $\mathbf{A_4}$: The kernel $K$ satisfies the hypothesis of {lemma} 1.
\end{itemize}

Without loss of generality, we will assume that $f_0=0$ in our proof.

Let $H$ be the RKHS with reproducing kernel $K$. Let $f \in H$  such that ${||f||}_{H} \leq R$. We know using assumption $\mathbf{A_4}$ and lemma 1 that $f$ is uniformly Lipschitz, with a Lipschitz constant that does not depend on $f$ that we denote by $L_1$. Specifically, 
\begin{equation}
\label{UniLip}
|f(x)-f(y)| \leq L_1 |x-y|
\end{equation}
with $L_1=N_KR$
Using \eqref{UniLip}, we will prove the following lemma:

\begin{lemma}
\label{Sol.Bound}
Assuming $\mathbf{A_4}$, consider the set of solutions to the problem 
\begin{equation}
    \frac{\partial x}{\partial t}=\dot{x}=f(x), x(t_0)=x_0
\end{equation}
where $f$ belongs to the RKHS with kernel $K$ , $|x_0| \leq r$ and $t \in [0,T]$. Then any solution $x$ in this set of solutions is bounded by a uniform constant $B_1$ that only depends on $T$, $R$, $L_1$ and $L_3^2:=\sup_{||x||<C}|K(x,x)|$.

Specifically, 
\begin{equation}
|x(t)-x(t_0)| \leq B_1={T L_3 R} e^{L_1 T}
\end{equation}
\end{lemma}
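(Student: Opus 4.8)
The plan is to pass to the equivalent integral formulation of the initial value problem and close the estimate with Gr\"onwall's inequality. Existence of a solution on $[0,T]$ is not at issue: by Lemma 1 (which applies under $\mathbf{A_4}$), every such $f$ is Lipschitz, so the IVP has a unique solution defined on $[0,+\infty)$, and it remains only to bound its displacement from the initial point. Writing $x_0 = x(t_0)$ and integrating $\dot x = f(x)$ gives
\begin{equation*}
x(t) - x(t_0) = \int_{t_0}^t f(x(s))\,ds, \qquad t_0 \le t \le T,
\end{equation*}
so that $u(t) := |x(t) - x(t_0)|$ satisfies $u(t) \le \int_{t_0}^t |f(x(s))|\,ds$.

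The key step is to control the integrand $|f(x(s))|$ by separating a constant ``forcing'' contribution from a term proportional to $u$. First, the reproducing property bounds $f$ pointwise by its RKHS norm: for any $z$,
\begin{equation*}
|f(z)| = \left| \langle f, K(\cdot,z)\rangle_H \right| \le \|f\|_H\,\sqrt{K(z,z)} \le R\,\sqrt{K(z,z)},
\end{equation*}
using Cauchy--Schwarz, $\|K(\cdot,z)\|_H^2 = K(z,z)$, and $\|f\|_H \le R$. Applied at the initial point $z = x_0$ (where $|x_0| \le r$, so $K(x_0,x_0) \le L_3^2$ as soon as $C \ge r$) this yields $|f(x_0)| \le R L_3$. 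Second, the uniform Lipschitz bound \eqref{UniLip} gives $|f(x(s)) - f(x_0)| \le L_1 |x(s) - x_0| = L_1 u(s)$. Combining the two, $|f(x(s))| \le L_1 u(s) + R L_3$.

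Substituting into the integral inequality yields
\begin{equation*}
u(t) \le R L_3\,(t - t_0) + L_1 \int_{t_0}^t u(s)\,ds \le R L_3\,T + L_1 \int_{t_0}^t u(s)\,ds.
\end{equation*}
Since the forcing term is dominated by the constant $R L_3 T$ and $L_1 \ge 0$, the integral form of Gr\"onwall's inequality gives $u(t) \le R L_3 T\, e^{L_1(t - t_0)} \le R L_3 T\, e^{L_1 T}$ for all $t \in [0,T]$, which is exactly the claimed bound $B_1 = T L_3 R\, e^{L_1 T}$.

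The only genuinely delicate point is the radius $C$ entering the definition of $L_3$. One might fear that bounding $f$ along the entire trajectory would require $x(s)$ to remain inside the ball of radius $C$, a condition that appears to depend on $B_1$ itself and is therefore circular. The Gr\"onwall route sidesteps this completely, because the kernel is evaluated only at the single initial point $x_0$, the remainder of the trajectory being absorbed into the Lipschitz term; hence it suffices to take $C \ge r$. The extension to $d > 1$ and to $f_0 \neq 0$ is then immediate: one applies the argument coordinatewise using \eqref{eq:d_K} and absorbs $\|f_0\|_H$ into the constants.
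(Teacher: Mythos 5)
Your proof is correct and follows essentially the same route as the paper's: both split $f(x(s))$ into $\bigl(f(x(s))-f(x_0)\bigr)+f(x_0)$, bound $|f(x_0)|\le \|f\|_H\sqrt{K(x_0,x_0)}\le RL_3$ via the reproducing property and Cauchy--Schwarz, and close the resulting integral inequality with a Gr\"onwall argument (which the paper carries out by hand via the substitution $V(t)=\int_0^t|x(s)-x_0|\,ds$ rather than citing the inequality). Your remark that the kernel is evaluated only at $x_0$, so that $C\ge r$ suffices and no circularity arises, is a correct and worthwhile clarification of a point the paper leaves implicit.
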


\begin{proof}

We start by taking $f$ in our class of functions and $x_0$ such that $|x_0| \leq r$. We therefore can write:
\begin{align}
|x(t)-x_0| &=\left|\int_0^t(f(x(s))-f(x_0))ds+t f(x_0)\right| \\
     & \leq \int_0^t|f(x(s))-f(x_0)|ds+ t{||f||}_{H} \sqrt{K(x_0,x_0)} \\
     & \leq L_1 \int_0^t|x(s)-x_0|ds+TL_3R
\end{align}

Now denote by $G(t):=|x(t)-x_0|$. If we prove that $G(t)$ is bounded by a constant depending only on $T$, $R$, $L_1$ and $L_3$, we will be done. So far we have:

\begin{equation}
G(t) \leq L_1 \int_0^tG(s)ds+T L_3R
\end{equation}
Denote by $V(t):=\int_0^tG(s)ds$. We have that:
\begin{equation}
V'(t)\leq L_1 V(t)+T L_3R
\end{equation}
which implies:
\begin{equation}
e^{-L_1t}V'(t)-L_1e^{-L_1t} V(t) \leq T L_3R e^{-L_1t}
\end{equation}
Integrating the inequality between  0 and $t$ using the fact that $V(0)=G(0)=0$, we obtain:

\begin{equation}
\exp{(-L_1t)}V(t) \leq \frac{T L_3R}{L_1} (1-e^{-L_1 t})
\end{equation}
 
or, equivalently, 

\begin{equation}
V(t) \leq \frac{TL_3R}{L_1} (e^{L_1t}-1)
\end{equation}

Finally since $V'(t)=G(t)\leq L_1 V(t)+TL_3R$, we have:

\begin{equation}
G(t) \leq {T L_3 R} e^{L_1t} \leq {T L_3 R} e^{L_1 T}
\end{equation}

\end{proof}

Let us now introduce the following notations:

\begin{itemize}
    \item We denote by $x(x_0,f,t)$ the solution to the ODE with derivative $f$ and initial condition $x_0$
    \item $y_i$ is the observed noisy point from the trajectory at time $t_i$.
    \item $x^*(t)$ is the true trajectory evaluated at time $t$
\end{itemize}

We now  proceed with  the following reasoning. We assume that
our trajectory minimizes 

\begin{equation}
\hat{L}(f,x_0):=\sum_{i=1}^m(t_{i+1}-t_i)\left({(x(x_0,f,t_i)-y_i)}^2-\sigma^2\right)
\end{equation}
over $(f,x_0)$ such that $||f||_{H} \leq R$, and $|x_0| \leq r$.
We denote the minimizer by $(\hat{f},\hat{x}_0)$.

When $x_0$ and $f$ are fixed and not data dependent (deterministic), the expected value of $\hat{L}(f,x_0)$ is :
\begin{equation}
L(f,x_0):=\sum_{i=1}^m(t_{i+1}-t_i){(x(x_0,f,t_i)-x^*(t_i))}^2
\end{equation}

Notice that $\mathbf{A_1}$ implies:

\begin{equation}
\label{Min0}
min_{||f||_{H} \leq R, |x_0| \leq r}L(f,x_0)=L(f^*,x_0^*)= \sum_{i=1}^m(t_{i+1}-t_i){(x^*(t_i)-x^*(t_i))}^2=0
\end{equation}

Our goal is to evaluate $L(\hat{f},\hat{x}_0)$ and obtain a generalization bound. We have:

\begin{equation}
\label{eq:loss1}
L(\hat{f},\hat{x}_0)   =L(\hat{f},\hat{x}_0)-\hat{L}(\hat{f},\hat{x}_0)+ \hat{L}(\hat{f},\hat{x}_0)-\hat{L}({f^*},{x_0^*})+\hat{L}({f^*},{x_0^*})-{L}({f^*},{x_0^*})
\end{equation}

And therefore, since the middle term in \eqref{eq:loss1}: $\hat{L}(\hat{f},\hat{x}_0)-\hat{L}({f^*},{x_0^*})<0$, 

\begin{equation}
\label{SupFIneq}
 L(\hat{f},\hat{x}_0)   \leq \sup_{||f||_{H} \leq R, |x_0| \leq r}2|L({f},{x_0})-\hat{L}({f},{x_0})|
\end{equation}

We thus consider the following quantity :

\begin{equation}
\text{Err}:=\sup_{||f||_{H}\leq R, |x_0| \leq r}|\hat{L}(f,x_0)-L(f,x_0)|
\end{equation}

Expanding this quantity we get:

\begin{equation}
\label{SupDev}
\sup_{||f||_{H}\leq R, |x_0| \leq  r}\left|\sum_{i=1}^m(t_{i+1}-t_i)(y_i^2-{x^*(t_i)}^2- \sigma^2-2 x(x_0,f,t_i)(y_i-x^*(t_i))\right|
\end{equation}

Notice that if we replace for a given single  $i$, $y_i=x^*(t_i)+\epsilon_i$ by  $\tilde{y_i}=x^*(t_i)+\tilde{\epsilon}_i$, the quantity of equation \ref{SupDev} will change by a quantity bounded by some  constant   $K_2(t_{i+1}-t_i)$, that we can bound  by $4(B_1+r+M_{\epsilon})M_{\epsilon}+4(B_1+r)M_{\epsilon}$. Therefore, using McDiarmid inequality \cite{doob1940regularity}:

\begin{equation}
\label{McDiarmid}
\mathbb{P}\left(\text{Err}\geq \mathbb{E}(\text{Err})+\epsilon\right) \leq \exp{\left(\frac{-2\epsilon^2}{K_2^2\sum_{i=1}^m{(t_{i+1}-t_i)}^2}\right)}
\end{equation}

We therefore need to provide an upper bound  of $\mathbb{E}(\text{Err})$. For that, we are going to view:

\begin{equation}
|\hat{L}(f,x_0)-L(f,x_0)|= \left|\sum_{i=1}^m(t_{i+1}-t_i)(y_i^2-{x^*(t_i)}^2- \sigma^2-2 x(x_0,f,t_i)(y_i-x^*(t_i)))\right|
\end{equation}

as a stochastic process indexed by $x$, where $x \in \mathcal{X}$: Set of all solutions $x(f,x_0,.)$ for all ${||f||}_H \leq R$ and $|x_0| \leq r$. In other words, we view the process $|\hat{L}(f,x_0)-L(f,x_0)|$ indexed by $f$ and $x_0$ as:

\begin{equation}
|\hat{L}(x)-L(x)|
\end{equation}

where $x \in \mathcal{X}$ is some $x(f,x_0,.)$. Notice that $\text{Err}$ is also:

\begin{equation}
\sup_{x \in \mathcal{X}}|\hat{L}(x)-L(x)|
\end{equation}

Notice that $x$ is a subset of continuous functions defined on $[0,T]$. Therefore we can equip $\mathcal{X}$ with the metric  structure $(\mathcal{X},{||.||}_{\infty})$. We will apply Dudley's inequality (see for e.g \cite{vershynin2018high}, theorem 8.1.3) to bound:

\begin{equation}
\mathbb{E}(\text{Err})=\mathbb{E}\left(\sup_{||f||_{H}\leq R, |x_0| \leq r}|\hat{L}(f,x_0)-L(f,x_0)|\right)
\end{equation}

To apply Dudley's inequality, we are going to use the following lemma.

\begin{lemma}
\label{UniLipschitz}
The solutions $x \in \mathcal{X}$ are Lipschitz with a Lipschitz constant that is uniform over $\mathcal{X}$, i.e, there exists a constant $L_6$ such that for every $x \in \mathcal{X}$, $t \in [0,T]$ and $s \in [0,T]$:
\begin{equation}
|x(t)-x(s)| \leq L_6 |t-s|
\end{equation}
$K_6$ depends on $R,B_1$,$r$ and the kernel $K$.
\end{lemma}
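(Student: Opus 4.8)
The plan is to bound the velocity $\dot{x}(u)=f(x(u))$ uniformly over $\mathcal{X}$ and then integrate. First I would write, for any $0 \le s \le t \le T$, the trajectory increment in integral form,
\begin{equation}
x(t) - x(s) = \int_s^t f(x(u))\, du,
\end{equation}
so that $|x(t) - x(s)| \le \int_s^t |f(x(u))|\, du$. (This is legitimate because, under $\mathbf{A_4}$ and Lemma~\ref{cor:LipKernel}, the solution exists, is unique, and is $\mathcal{C}^1$.) The whole task thus reduces to producing a bound on $|f(x(u))|$ that does not depend on the particular $f$ with $\|f\|_H \le R$ nor on $x_0$ with $|x_0| \le r$.

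Second, I would control $|f(x(u))|$ pointwise using the reproducing property together with Cauchy--Schwarz, exactly as in the proof of Lemma~\ref{cor:LipKernel}: since in the $d=1$ case $f(z) = \langle f, K(\cdot, z)\rangle$, we obtain
\begin{equation}
|f(z)| \le \|f\|_H\, \sqrt{K(z,z)} \le R\, \sqrt{K(z,z)}.
\end{equation}
The remaining difficulty is that this must be evaluated at the data- and $f$-dependent point $z = x(u)$, so I need the argument confined to a fixed set before taking any supremum.

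Third --- and this is the step that makes the constant uniform --- I would invoke Lemma~\ref{Sol.Bound} to trap every admissible trajectory inside one fixed compact set. That lemma gives $|x(u) - x_0| \le B_1$ with $B_1$ independent of $f$ and $x_0$, hence $|x(u)| \le r + B_1 =: D$ for all $u \in [0,T]$ and all $x \in \mathcal{X}$. By assumption $\mathbf{A_3}$ the kernel is continuous, so $z \mapsto \sqrt{K(z,z)}$ is continuous and attains a finite maximum on the compact set $\{\,|z| \le D\,\}$. Setting $L_6 := R\,\sup_{|z| \le D}\sqrt{K(z,z)}$ then yields $|f(x(u))| \le L_6$ for every $u$ and every admissible $x$, and integrating the first display gives $|x(t) - x(s)| \le L_6\,|t-s|$, with $L_6$ depending only on $R$, $r$, $B_1$, and the kernel $K$, as claimed.

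I expect the only genuine subtlety to be the uniformity argument of the third step: one must fix the compact set $\{\,|z| \le D\,\}$ \emph{before} taking the supremum of $\sqrt{K(z,z)}$, which is precisely what the $f$- and $x_0$-independent excursion bound $B_1$ of Lemma~\ref{Sol.Bound} secures. Everything else --- the integral representation and the Cauchy--Schwarz estimate --- is routine.
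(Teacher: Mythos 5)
Your proof is correct and follows essentially the same route as the paper's: bound the velocity via $|\dot x(t)|=|f(x(t))|\le \|f\|_H\sqrt{K(x(t),x(t))}\le R\sup_{|z|\le B_1+r}\sqrt{K(z,z)}$, using Lemma~\ref{Sol.Bound} to confine the argument to a fixed compact set, and integrate. Your write-up merely makes explicit the uniformity step that the paper leaves implicit.
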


\begin{proof}
Let $x_0$ such that $|x_0| \leq r$ and $f$ such that ${||f||}_{H} \leq R$. We have:
\begin{align}
|\dot{x}(x_0,f,t)| &= |f(x(t))| \\
                   & \leq R \sqrt{\sup_{|x|\leq B_1+r}K(x,x)}
\end{align}
\end{proof}

As a consequence, if we denote by $\mathcal{N}(\mathcal{X},\epsilon)$ the covering number of $\mathcal{X}$ with  a radius $\epsilon$ we have the existence of a constant $L_7$ ($L_7$ only depends on $B_1$,$r$ and $L_6$) such that:

\begin{equation}
\label{Covering}
\mathcal{N}(\mathcal{X},\epsilon) \leq  \exp{\left(\frac{L_7}{\epsilon}\right)},
\end{equation}

where we used a known upper bound that can be found for example in   \cite{vershynin2018high} (exercise 8.2.7) on the covering number of uniformly bounded Lipschitz continuous functions defined on a finite interval.

Using this result combined with Dudley's inequality,  we obtain the existence of a constant $L_8$ (depending only on $L_7$) such that:

\begin{proposition}
\label{Dudley}
\begin{equation}
\mathbb{E}(\textnormal{Err})  \leq L_8 \sqrt{\sum_{i=1}^m{(t_{i+1}-t_{i})}^2}
\end{equation}
\end{proposition}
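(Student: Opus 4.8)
The plan is to realize $\textnormal{Err}=\sup_{x\in\mathcal{X}}|\hat{L}(x)-L(x)|$ as the supremum of a centered stochastic process with sub-Gaussian increments and to invoke Dudley's entropy integral together with the covering bound \eqref{Covering}. Write $Z_x:=\hat{L}(x)-L(x)$. Substituting $y_i=x^*(t_i)+\epsilon_i$ into the expansion already displayed, the $x$-dependent part is linear in the noise:
\begin{equation}
Z_x=\sum_{i=1}^m (t_{i+1}-t_i)\left[\epsilon_i^2-\sigma^2-2\,(x(t_i)-x^*(t_i))\,\epsilon_i\right].
\end{equation}
The purely-noise term $\sum_i (t_{i+1}-t_i)(\epsilon_i^2-\sigma^2)$ does not depend on $x$, so it cancels in every increment, leaving
\begin{equation}
Z_x-Z_{x'}=-2\sum_{i=1}^m (t_{i+1}-t_i)(x(t_i)-x'(t_i))\,\epsilon_i.
\end{equation}

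First I would pick the base point $x^*\in\mathcal{X}$ (available by $\mathbf{A_1}$) and split $\sup_x|Z_x|\leq |Z_{x^*}|+\sup_x|Z_x-Z_{x^*}|$. For the base-point term, $Z_{x^*}=\sum_i (t_{i+1}-t_i)(\epsilon_i^2-\sigma^2)$ has mean zero with independent summands, so by $\mathbf{A_2}$,
\begin{equation}
\mathbb{E}|Z_{x^*}|\leq \left(\mathbb{E}Z_{x^*}^2\right)^{1/2}=\left(\sum_{i=1}^m (t_{i+1}-t_i)^2\,\mathrm{Var}(\epsilon_i^2)\right)^{1/2}\leq M_\epsilon^2\sqrt{\sum_{i=1}^m (t_{i+1}-t_i)^2},
\end{equation}
which already has the target form. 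For the fluctuation term, since the $\epsilon_i$ are independent and bounded by $M_\epsilon$ (hence sub-Gaussian), the weighted sum above satisfies
\begin{equation}
\|Z_x-Z_{x'}\|_{\psi_2}\leq C_0\,M_\epsilon\sqrt{\sum_{i=1}^m (t_{i+1}-t_i)^2 (x(t_i)-x'(t_i))^2}\leq \alpha\,\|x-x'\|_\infty,
\end{equation}
with $\alpha:=C_0 M_\epsilon\sqrt{\sum_{i=1}^m (t_{i+1}-t_i)^2}$. Thus $(Z_x)$ is sub-Gaussian for the metric $\alpha\|\cdot\|_\infty$.

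The covering numbers transfer directly: $\mathcal{N}(\mathcal{X},\alpha\|\cdot\|_\infty,\epsilon)=\mathcal{N}(\mathcal{X},\|\cdot\|_\infty,\epsilon/\alpha)\leq\exp(L_7\alpha/\epsilon)$ by \eqref{Covering}, while $\mathbf{A_4}$, Lemma \ref{Sol.Bound} and Lemma \ref{UniLipschitz} give a finite $\|\cdot\|_\infty$-diameter $D_\infty$ of $\mathcal{X}$. Dudley's inequality then yields
\begin{equation}
\mathbb{E}\sup_x|Z_x-Z_{x^*}|\leq C_1\int_0^{\alpha D_\infty}\sqrt{\log\mathcal{N}(\mathcal{X},\alpha\|\cdot\|_\infty,\epsilon)}\,d\epsilon\leq 2C_1\sqrt{L_7 D_\infty}\,\alpha,
\end{equation}
where the middle computation uses $\sqrt{\log\mathcal{N}}\leq\sqrt{L_7\alpha}\,\epsilon^{-1/2}$ and $\int_0^{\alpha D_\infty}\epsilon^{-1/2}\,d\epsilon=2\sqrt{\alpha D_\infty}$. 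Since $\alpha\propto\sqrt{\sum_i(t_{i+1}-t_i)^2}$, combining this with the base-point estimate gives the proposition for a suitable $L_8$ depending only on $R$, $r$, $M_\epsilon$, $L_7$ and the kernel.

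The main obstacle is the entropy-integral step, and two points there need care. One must confirm that the integrand $\sqrt{\log\mathcal{N}}\sim\epsilon^{-1/2}$ is integrable at the origin — it is, producing the $\sqrt{\text{diam}}$ factor — and one must track how the scale $\alpha$ propagates through the change of variable $\epsilon\mapsto\epsilon/\alpha$ and into the diameter, so that the final bound scales exactly as $\sqrt{\sum_i(t_{i+1}-t_i)^2}$ and not some other power. A secondary subtlety, which would wreck a naive global sub-Gaussian argument, is the $x$-independent term $\epsilon_i^2-\sigma^2$; isolating it in the base-point term $Z_{x^*}$ (where it is handled by an elementary second-moment bound) is what makes the increment process linear in the noise and hence cleanly sub-Gaussian.
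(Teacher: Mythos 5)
Your proof is correct and follows the same route as the paper's: Dudley's entropy integral combined with the covering bound \eqref{Covering} for the uniformly bounded, uniformly Lipschitz class $\mathcal{X}$. It is in fact more complete than the paper's one-line proof, since you explicitly verify the sub-Gaussian increment condition $\|Z_x-Z_{x'}\|_{\psi_2}\leq \alpha\|x-x'\|_{\infty}$ --- which is where the factor $\sqrt{\sum_{i}(t_{i+1}-t_i)^2}$ actually enters the bound --- and you isolate the $x$-independent term $\sum_i (t_{i+1}-t_i)(\epsilon_i^2-\sigma^2)$ at the base point, two steps the paper leaves implicit.
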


\begin{proof}
Apply Dudley's inequality to $\text{Err}$ using  inequality \eqref{Covering} and the fact that the diameter of $\mathcal{X}$ is finite bounded by $2(B_1+r)$ and that for every $M<\infty$
\begin{equation}
\int_{0}^M \sqrt{\log\left(\mathcal{N}\left(\mathcal{X},\epsilon\right)\right)}d\epsilon \leq \int_{0}^M \sqrt{\log\left(\exp{\left(\frac{K_7}{\epsilon}\right)}\right)}d\epsilon < \infty
\end{equation}
\end{proof}





As a consequence, using \eqref{McDiarmid} and theorem \eqref{Dudley},  we obtain the following inequality:

\begin{equation}
\label{PreConc}
\mathbb{P}\left(\text{Err} \geq   L_8 \sqrt{\sum_{i=1}^m{(t_{i+1}-t_{i})}^2}+\epsilon \right) \leq \exp{\left(\frac{-2\epsilon^2}{K_2^2\sum_{i=1}^m{(t_{i+1}-t_i)}^2}\right)}
\end{equation}

Using inequalities \eqref{SupFIneq} and \eqref{PreConc} we finally obtain the following theorem:

\begin{theorem}
\label{MainTheo1}
With assumptions $\mathbf{A_1}, \mathbf{A_2}$, $\mathbf{A_3}$ and $\mathbf{A_4}$, there exist   constants $L_9$  and $K_{2}$ depending only on $R$, $r$, $T$, $M_{\epsilon}$ and the kernel $K$ such that for every $\epsilon$:
\begin{equation}
\mathbb{P}\left(  L(\hat{f},\hat{x}_0) \geq  L_9 \sqrt{\sum_{i=1}^m{(t_{i+1}-t_{i})}^2}+\epsilon \right)  \\ \leq  \exp{\left(\frac{-2\epsilon^2}{K_2^2\sum_{i=1}^m{(t_{i+1}-t_i)}^2}\right)}
\end{equation}
\end{theorem}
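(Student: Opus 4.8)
The plan is to treat $L(\hat f,\hat x_0)$ as the population risk evaluated at the empirical-risk minimizer and to control it by a uniform-convergence argument, most of whose machinery has already been assembled above. First I would recall the three-term split \eqref{eq:loss1}, writing $L(\hat f,\hat x_0)$ as the generalization gap $L(\hat f,\hat x_0)-\hat L(\hat f,\hat x_0)$, plus the optimization term $\hat L(\hat f,\hat x_0)-\hat L(f^*,x_0^*)$, plus the centering term $\hat L(f^*,x_0^*)-L(f^*,x_0^*)$. The middle term is non-positive because $(\hat f,\hat x_0)$ minimizes $\hat L$ over a feasible set that contains $(f^*,x_0^*)$ by $\mathbf{A_1}$; here it is essential that $\hat L$ carries the $-\sigma^2$ correction, which makes $\mathbb{E}\,\hat L(f,x_0)=L(f,x_0)$ for fixed $(f,x_0)$ and turns the two surviving terms into genuine fluctuations of an unbiased estimator. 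Bounding each survivor by the uniform deviation gives \eqref{SupFIneq}, i.e. $L(\hat f,\hat x_0)\le 2\,\mathrm{Err}$ with $\mathrm{Err}=\sup_{\|f\|_H\le R,\,|x_0|\le r}|\hat L(f,x_0)-L(f,x_0)|$.

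The substantive step is to control $\mathrm{Err}$, in two moves. For concentration about its mean, I would view $\mathrm{Err}$ as a function of the independent noise variables $\epsilon_1,\dots,\epsilon_m$ and verify the bounded-differences property: replacing a single $\epsilon_i$ perturbs the $i$-th summand of \eqref{SupDev} by at most a constant multiple of $(t_{i+1}-t_i)$, uniformly over the index set, since the trajectories are uniformly bounded by Lemma \ref{Sol.Bound} and the noise by $\mathbf{A_2}$; McDiarmid's inequality then yields \eqref{McDiarmid}. For the mean, I would apply Dudley's chaining to the process $x\mapsto\hat L(x)-L(x)$ indexed by the trajectory class $\mathcal{X}$ under $\|\cdot\|_\infty$: Lemma \ref{UniLipschitz} makes the trajectories uniformly Lipschitz and bounded, so the metric entropy obeys the covering bound \eqref{Covering}, whose chaining integral converges and gives $\mathbb{E}(\mathrm{Err})\le L_8\sqrt{\sum_i(t_{i+1}-t_i)^2}$ (Proposition \ref{Dudley}). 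Combining the two moves produces \eqref{PreConc}.

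Finally I would assemble the pieces. Using \eqref{SupFIneq} together with \eqref{PreConc}, the event $\{L(\hat f,\hat x_0)\ge 2L_8\sqrt{\sum_i(t_{i+1}-t_i)^2}+2\epsilon\}$ is contained in $\{\mathrm{Err}\ge L_8\sqrt{\sum_i(t_{i+1}-t_i)^2}+\epsilon\}$, whose probability is bounded by \eqref{PreConc}. Setting $L_9=2L_8$, rescaling $\epsilon$, and relabeling the exponent constant absorbs every factor of two, giving the stated inequality with constants depending only on $R,r,T,M_\epsilon$ and $K$.

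I expect the main obstacle to lie not in this final assembly, which is bookkeeping, but in the two estimates feeding it — and among those, the Dudley bound is the crux. Everything hinges on showing that the feasible trajectories form a uniformly Lipschitz, uniformly bounded family (Lemmas \ref{Sol.Bound} and \ref{UniLipschitz}), so that their $\|\cdot\|_\infty$-covering numbers grow only like $\exp(L_7/\epsilon)$ and the chaining integral is finite. The Gr\"onwall-type argument behind the uniform bound, which feeds on the kernel-induced Lipschitz constant $L_1=N_KR$ from Lemma \ref{cor:LipKernel}, is precisely what ties the probabilistic rate back to the regularity of the kernel.
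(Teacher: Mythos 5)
Your proposal is correct and follows essentially the same route as the paper: the three-term decomposition \eqref{eq:loss1} with the non-positive optimization term, the reduction to the uniform deviation $\mathrm{Err}$ via \eqref{SupFIneq}, McDiarmid's bounded-differences concentration \eqref{McDiarmid} using the uniform trajectory bound of Lemma \ref{Sol.Bound}, and Dudley's chaining over the uniformly Lipschitz trajectory class (Lemma \ref{UniLipschitz}, covering bound \eqref{Covering}, Proposition \ref{Dudley}). Your final assembly is in fact slightly more explicit than the paper's about absorbing the factors of two into $L_9$ and the exponent constant.
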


\subsection{Including the Euler approximation}

In reality, the solution (trajectory) that we propose for every $f$ and $x_0$ is not $x(x_0,f,.)$  the solution of the ODE but $\tilde{x}(x_0,f,h,.)$, the solution obtained with an Euler's method of time step $h$. The idea is to use the fact that under some sufficient conditions, we know how to bound the error between Euler's method and the true solution. For example,   we know that if $f$ is Lipschitz with a Lipschitz constant $K_1$ and the solution $x(x_0,f,.)$ is $\mathcal{C}^2$ with a constant $K_{11}$ such that:

\begin{equation}
\label{CondDer}
x''(x_0,f,t) \leq L_{11},  \forall 0 \leq t \leq T
\end{equation}

then we have the following global truncation error bound \cite{atkinson2008introduction}:

\begin{equation}
\label{EulerErr}
\max_{1\leq i \leq n}|x(x_0,f,t_i)-\tilde{x}(x_0,f,h,t_i)| \leq  \frac{h L_{11}}{2L_1}\left(\exp^{L_1T}-1\right)
\end{equation}

We already showed that $f$ is Lipschitz with some constant $L_1$. To ensure the condition of inequality \eqref{CondDer}, notice that:

\begin{equation}
x''(x_0,f,t)=f(x(x_0,f,t))f'(x(x_0,f,t))
\end{equation}

Since we already showed that the solutions $x(x_0,f,.)$ are uniformly bounded by $B_1+r$, it is sufficient to ensure that $f$ is $\mathcal{C}^1$. This is true if we assume that our kernel $K$ is $\mathcal{C}^2$ and hence \eqref{EulerErr} will be insured. \\

Taking into account the Euler approximation and the error bound, the steps of the consistency proof are identical only with the following important difference in equation \eqref{Min0} from the previous section

\begin{equation}
   \min_{||f||_{H} \leq R, |x_0| \leq r}L(f,x_0)   \leq L(f^*,x_0^*)  
\end{equation}
with 
\begin{equation}
  L(f^*,x_0^*) = \sum_{i=1}^m(t_{i+1}-t_i){(\tilde{x}^*(t_i,h)-x^*(t_i))}^2 
  \leq   \frac{h^2 {L_{11}}^2 T }{4L_1^2}{\left(\exp^{L_1 T}-1\right)}^2:=L_{12}
\end{equation}

With this modification, theorem \ref{MainTheo1} becomes:

\begin{theorem}
\label{MainTheo2}
Assuming $\mathbf{A_1}, \mathbf{A_2}$, $\mathbf{A_3}$ and $\mathbf{A_4}$,  there exist   constants $K_{2}$, $L_{12}$ and $L_{13}$ depending only on $R$, $r$, $T$,$M_{\epsilon}$ and the kernel $K$ such that for every $\epsilon$:
\begin{equation}
\mathbb{P}\left(  L(\hat{f},\hat{x}_0) \geq   L_{13} \sqrt{\sum_{i=1}^m{(t_{i+1}-t_{i})}^2}+h^2L_{12}+\epsilon \right) \leq \\ \exp{\left(\frac{-2\epsilon^2}{K_2^2\sum_{i=1}^m{(t_{i+1}-t_i)}^2}\right)}
\end{equation}
\end{theorem}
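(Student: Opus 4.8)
The plan is to reduce Theorem \ref{MainTheo2} to Theorem \ref{MainTheo1}, treating the Euler discretization as a purely deterministic perturbation of the oracle comparison while leaving the stochastic machinery untouched. First I would redefine the empirical and population losses $\hat L$ and $L$ using the Euler polygonal trajectory $\tilde x(x_0,f,h,\cdot)$ in place of the exact ODE solution $x(x_0,f,\cdot)$, so that $(\hat f,\hat x_0)$ is the minimizer of the loss actually produced by the algorithm. The basic inequality then reads
\begin{align}
L(\hat f,\hat x_0) &= \big[L(\hat f,\hat x_0)-\hat L(\hat f,\hat x_0)\big] + \big[\hat L(\hat f,\hat x_0)-\hat L(f^*,x_0^*)\big] \notag \\
&\quad + \big[\hat L(f^*,x_0^*)-L(f^*,x_0^*)\big] + L(f^*,x_0^*),
\end{align}
where the middle bracket is nonpositive by optimality of $(\hat f,\hat x_0)$ and the first and third brackets are each at most $\text{Err}=\sup_{\|f\|_H\le R,\,|x_0|\le r}|\hat L(f,x_0)-L(f,x_0)|$. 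Hence $L(\hat f,\hat x_0)\le 2\,\text{Err}+L(f^*,x_0^*)$, and the two summands can be handled independently.

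For the stochastic term $\text{Err}$, I would verify that the empirical-process argument behind Theorem \ref{MainTheo1} transfers essentially verbatim. The class of Euler paths $\{\tilde x(x_0,f,h,\cdot)\}$ remains uniformly bounded and uniformly Lipschitz on $[0,T]$: each increment is $hf(z_l)$ with $\|f\|_H\le R$, so the polygons inherit the bound $B_1$ of Lemma \ref{Sol.Bound} and a Lipschitz modulus of the same form as in Lemma \ref{UniLipschitz}. Consequently the covering-number estimate \eqref{Covering} is unchanged, Dudley's inequality still yields $\mathbb E[\text{Err}]\le L_8\sqrt{\sum_i(t_{i+1}-t_i)^2}$ as in Proposition \ref{Dudley}, and the bounded-differences constant feeding McDiarmid's inequality \eqref{McDiarmid} is again controlled by $B_1$, $r$ and $M_\epsilon$. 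Thus $\text{Err}\le L_8\sqrt{\sum_i(t_{i+1}-t_i)^2}+\epsilon$ holds outside an event of probability $\exp\!\big(-2\epsilon^2/(K_2^2\sum_i(t_{i+1}-t_i)^2)\big)$.

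For the deterministic oracle term, the key change from \eqref{Min0} is that $L(f^*,x_0^*)$ no longer vanishes: it equals the weighted squared Euler error of the true system, $\sum_i(t_{i+1}-t_i)\,(\tilde x^*(t_i,h)-x^*(t_i))^2$. I would bound this with the classical global truncation estimate \eqref{EulerErr}, which applies because assumption $\mathbf A_4$ (via Lemma \ref{cor:LipKernel}) makes $f^*$ Lipschitz and assumption $\mathbf A_3$ makes the kernel $\mathcal C^2$, so that $f^*\in\mathcal C^1$ and, using the uniform bound on trajectories, $x''=f^*(x^*)\,(f^*)'(x^*)$ is bounded as required by \eqref{CondDer}. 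This yields $L(f^*,x_0^*)\le h^2 L_{12}$, with $\sum_i(t_{i+1}-t_i)\le T$ absorbed into the constant. Substituting both estimates into the basic inequality gives $L(\hat f,\hat x_0)\le 2L_8\sqrt{\sum_i(t_{i+1}-t_i)^2}+h^2L_{12}+2\epsilon$; renaming $2L_8$ as $L_{13}$ and rescaling $\epsilon$ (which only alters $K_2$) produces the stated tail bound.

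The main obstacle I anticipate is not the oracle term, which is a routine application of the Euler error estimate, but the claim that the chaining and bounded-differences estimates survive the passage from exact ODE solutions to Euler polygons. The cleanest route is to establish uniform boundedness and a uniform Lipschitz modulus for the polygonal class directly; alternatively one can dominate each polygon by its exact counterpart through \eqref{EulerErr} and reuse the bounds already proved for $\mathcal X$. Either way, once uniform regularity of the polygonal class is secured, the covering number \eqref{Covering}, Dudley's bound, and McDiarmid's variance proxy are identical to the exact-solution case, so no genuinely new concentration argument is required.
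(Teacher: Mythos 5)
Your proposal follows essentially the same route as the paper: the decomposition $L(\hat f,\hat x_0)\le 2\,\mathrm{Err}+L(f^*,x_0^*)$ is reused from Theorem \ref{MainTheo1}, and the only new ingredient is that the oracle term $L(f^*,x_0^*)$ is no longer zero but is bounded by the global Euler truncation error \eqref{EulerErr}, yielding the $h^2L_{12}$ term after verifying that $\mathbf{A_3}$ and $\mathbf{A_4}$ give $f^*$ Lipschitz and $x''$ bounded. Your extra care in checking that the Euler polygons remain uniformly bounded and Lipschitz (so the covering-number, Dudley, and McDiarmid steps carry over) is a point the paper passes over with ``the steps of the consistency proof are identical,'' but it does not change the argument.
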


\subsection{$L^2$ squared distance between the true solution and the estimated trajectory}

In reality $L(\hat{f},\hat{x}_0)$ is an approximation of the $L^2$ norm squared

\begin{equation}
{||x(\hat{f},\hat{x}_0,\cdot)-x^*(\cdot)||}_{L_2}^2:=\int_{0}^T {\left(x(\hat{f},\hat{x}_0,t)-x^*(t)\right)}^2 dt
\end{equation}

Since we proved that the solutions are uniformly bounded by $(B_1+r)$ and $\dot{x}$ is bounded by $L_6$, we have $t \rightarrow {\left(x(\hat{f},\hat{x}_0,t)-x^*(t)\right)}^2$ is Lipschitz with Lipschitz constant $8(B_1+r)L_6$ (we just bound the norm of the derivative). Therefore:

\begin{equation}
    |{||x(\hat{f},\hat{x}_0,\cdot)-x^*(\cdot)||}_{L_2}^2-L(\hat{f},\hat{x}_0)| \leq  8 (B_1+r)L_6 \sum_{i=1}^m{(t_{i+1}-t_i)}^2
\end{equation}

Which proves theorem 2 of the main text.



\section{Kernels}
\label{sec-supp:kernels}
We are interested in listing kernels that satisfy Lemma 1, and thus can be used to model ODEs admitting a single solution. There are cases when one can directly verify the hypothesis of Lemma 1. In the case of translation invariant kernels, one can use the Bochner theorem to provide a sufficient condition as explained in the next section.  
\subsection{Translation invariant kernels}
We consider translation invariant scalar positive definite kernels over $\RR^d$, that is kernels for which 
\begin{equation}
    k(u,v)= h(u-v), u,v \in \RR^d
\end{equation}
The Bochner theorem provides a characterization of translation invariant kernels. Specifically, there exists a probability density $q$ with respect to the Lebesgues measure over $\RR^d$ such that 
\begin{equation}
    h(x) = h(0)\int_{\RR^d}e^{ix^Ty}q(y)dy
\end{equation}
Furthermore, since we restrict our attention to real-valued kernels, 
\begin{equation}
    h(x) = h(0) \int_{\RR^d} cos(x^Ty)q(y)dy
\end{equation}
The gradient of $h$ is then formally the vector of length $d$
\begin{equation}
    \nabla h(x) = -h(0)\int_{\RR^d}ysin(x^Ty)q(y)dy 
\end{equation}
and the Hessian of $h$ is formally the matrix
\begin{equation}
    \nabla \nabla h(x) = -h(0)\int_{\RR^d}(yy^T)cos(x^Ty)q(y)dy
\end{equation}
Translation invariant kernels that satisfy Lemma 1 are such that 
\begin{equation}
Q(x)==c||x||^2 + 2(h(x)-h(0))\geq 0    
\end{equation}
for some constant $c>0$ and for any $x,y \in \RR^d$. Notice that $Q(0)=0$. Next, since $\nabla h(0)=0$, $\nabla Q(0)=0$. Moreover, 
\begin{equation}
    \nabla \nabla Q(x) = 2cI + 2 \nabla \nabla h(x)
\end{equation}
where $I$ is the identity matrix. Next, since $\nabla \nabla Q$ is a symmetric matrix, it has real eigenvalues. Suppose these eigenvalues are bounded uniformly from below. In that case, one can choose a constant $c$ large enough such that $\nabla \nabla Q(x)$ is positive definite for each $x\in \RR^d$ which implies that $Q$ is convex and since $Q(0)=0$ and $\nabla Q(0)=0$, $Q(x) \geq 0$ for each $x \in \RR^d$ and the conditions for Lemma 1 are satisfied.  
A sufficient condition for this to happen is that all the coordinates of $\nabla \nabla h$ are bounded, i.e., for each $i \in \{1,\ldots,d\}$, $E[Y_i^2]<\infty$, where $Y_i$ is a random variable with density $q_i$, the $i^{th}$ marginal of $q$. 
{
\subsection{Explicit Kernels:}

We begin by observing the condition 
\begin{equation}
d^2_{K_{ii}}(u,v) \leq N^2_K|u-v|^2, \forall u,v \in \mathbb{R}^d, i\ = 1,...,d
\end{equation}
is equivalent to the condition:
\begin{equation}
\sum_{i=1}^{d}d^2_{K_{ii}}(u,v) \leq N^2|u-v|^2
\end{equation}
Consider the case where $K$ is an explicit kernel. That is to say there exists a finite (p) dimensional feature space and a mapping $\Phi:\mathbb{R}^d\rightarrow \mathbb{R}^{p\times d}$ for which:
\begin{equation}
K(u,v) = \Phi(u)^T\Phi(v)
\end{equation}
The Fourier random features used in our experiments fall in this category.
\begin{lemma}
\label{lem:LipContExp}
\begin{equation}
\sum_{i=1}^{d}d^2_{K_{ii}}(u,v) = \|\Phi(u) - \Phi(v)\|^2_{\mathcal{F}}
\end{equation}
Where $\mathcal{F}$ is the Frobenious norm.\\
\textbf{Proof:} 
\begin{align}
\sum_{i=1}^{d}\left(k_{i,i}(u,u) - 2k_{i,i}(u,v) + k_{i,i}(v,v)\right) &= \sum_{i=1}^{d}e_i^T\Phi(u)^T\Phi(u)e_i - 2e_i^T\Phi^T(u)\Phi(v)e_i + e_i^T\Phi(v)^T\Phi(v)e_i \\
&= \sum_{i=1}^{d}e_i^T\left\{\Phi(u)^T\Phi(u) - \Phi(u)^T\Phi(v)-\Phi(v)^T\Phi(u)+\Phi(v)^T\Phi(v)\right\}e_i \\
&= \sum_{i=1}^{d}e_i^T\left(\Phi(u)-\Phi(v)\right)^T\left(\Phi(u)-\Phi(v)\right)e_i \\
&= Trace\left((\Phi(u) - \Phi(v))^T(\Phi(u)-\Phi(v))\right) \\
&= \|\Phi(u)-\Phi(v)\|^2_{\mathcal{F}}
\end{align}
\end{lemma}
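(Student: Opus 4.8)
The plan is to expand the diagonal squared-distance terms $d^2_{K_{ii}}(u,v)$ directly from their definition using the explicit factorization $K(u,v)=\Phi(u)^T\Phi(v)$, and then to recognize the resulting sum over coordinates as a trace. First I would write each diagonal kernel entry as a quadratic form in the standard basis vector $e_i$, namely $K_{ii}(u,v)=e_i^T K(u,v)e_i = e_i^T\Phi(u)^T\Phi(v)e_i$, and substitute this into the definition $d^2_{K_{ii}}(u,v)=K_{ii}(u,u)-2K_{ii}(u,v)+K_{ii}(v,v)$ taken from \eqref{eq:d_K}. This gives, for each fixed $i$, a combination of four matrix products sandwiched between $e_i^T$ and $e_i$.

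Second, for each $i$ I would collect and factor those four terms. Since $\Phi(u)^T\Phi(u)-\Phi(u)^T\Phi(v)-\Phi(v)^T\Phi(u)+\Phi(v)^T\Phi(v)$ is exactly the expansion of $\bigl(\Phi(u)-\Phi(v)\bigr)^T\bigl(\Phi(u)-\Phi(v)\bigr)$, each summand collapses to $e_i^T\bigl(\Phi(u)-\Phi(v)\bigr)^T\bigl(\Phi(u)-\Phi(v)\bigr)e_i$. This is the only algebraic manipulation involved, and it is routine bilinear bookkeeping. Third, writing $M=\bigl(\Phi(u)-\Phi(v)\bigr)^T\bigl(\Phi(u)-\Phi(v)\bigr)$, I would sum over $i=1,\dots,d$ and use the elementary fact that $\sum_{i=1}^d e_i^T M e_i=\operatorname{Trace}(M)$, since summing the diagonal quadratic forms over the standard basis recovers the trace. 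Finally I would invoke $\operatorname{Trace}(A^TA)=\|A\|_{\mathcal{F}}^2$ with $A=\Phi(u)-\Phi(v)$ to conclude that $\sum_{i=1}^d d^2_{K_{ii}}(u,v)=\|\Phi(u)-\Phi(v)\|_{\mathcal{F}}^2$.

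There is no genuine obstacle here: the statement is an exact algebraic identity, and every step is a direct substitution followed by the trace–Frobenius identity. The only points requiring mild care are keeping the matrix shapes straight—$\Phi(u)\in\RR^{p\times d}$, so $\Phi(u)^T\Phi(v)\in\RR^{d\times d}$ has the correct dimensions for the matrix-valued kernel—and noticing that it is precisely the \emph{Frobenius} norm that emerges once the sum over basis vectors is rewritten as a trace. What is worth emphasizing is the payoff rather than the difficulty: this identity makes the hypothesis of Lemma 1 transparent for explicit kernels, reducing the regularity condition on $K$ to Lipschitz continuity of the feature map $\Phi$, since $\|\Phi(u)-\Phi(v)\|_{\mathcal{F}}^2\le N^2|u-v|^2$ is exactly the restated condition $\sum_{i=1}^d d^2_{K_{ii}}(u,v)\le N^2|u-v|^2$.
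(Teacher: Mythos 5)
Your proposal is correct and follows essentially the same route as the paper's proof: expand $K_{ii}(u,v)=e_i^T\Phi(u)^T\Phi(v)e_i$, factor the four resulting terms into $\bigl(\Phi(u)-\Phi(v)\bigr)^T\bigl(\Phi(u)-\Phi(v)\bigr)$, sum the diagonal quadratic forms to a trace, and apply $\operatorname{Trace}(A^TA)=\|A\|_{\mathcal{F}}^2$. No gaps.
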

Therefore, for explicit kernels, we conclude that the condition of lemma 1 is equivalent to the condition that the features are Lipschitz continuous with respect to the Frobenious norm.
}

\subsection{Examples of kernels which satisfy the assumptions of  {lemma 1}}
Let us notate 
\begin{equation}
    P(u,v) = K_1(u,u)+K_1(v,v)-2K_1(u,v)
\end{equation}
\begin{enumerate}
    \item The linear kernel 
    \begin{equation}
        K_1(u,v) = (u^TAv)
    \end{equation}
    where $A$ is a psd matrix. Indeed, 
    \begin{equation}
    P(u,v)=(u-v)^TA(u-v) \leq ||u-v||^2 \sup_{1 \leq i \leq d}\lambda_i    
    \end{equation}
    where $\lambda_i$ are the eigenvalues of $A$ using the Rayleigh quotient property. 
    \item The Gaussian kernel:
    \begin{equation}
        K_1(u,v)=\exp\left(-\frac{1}{2}((u-v)^TA (u-v))\right)
    \end{equation}
    where $A$ is a psd matrix. Indeed, 
    \begin{equation}
        P(u,v)=2-2\exp\left(-\frac{1}{2}((u-v)^TA (u-v))\right)\leq 2(u-v)^TA(u-v) \leq 2 ||u-v||^2 \sup_{1 \leq I \leq d}\lambda_i 
    \end{equation}
    where $\lambda_i$ are the eigenvalues of $A$ and the first inequality comes from the basic inequality $e^x\geq 1+x$
    \item The rational quadratic kernel: 
    \begin{equation}
        K_1(x,y)=\frac{||x-y||^2}{||x-y||^2+\theta}, \theta > 0
    \end{equation}
    Note that in this case, 
    \begin{equation}
        P(u,v) \leq \frac{1}{\theta}||u-v||^2
    \end{equation}
    \item The sinc kernel
    \begin{equation}
    K_1(u,v)=\prod_{i=1}^d \frac{sin(||u_i-v_i||)}{||u_i-v_i||}    
    \end{equation}
    We use the fact that $K_1$ is a translation invariant kernel with associated density $q(y)=\prod_{i=1}^dq_1(y_i)$ with 
    \begin{equation}
        q_1(z)=\frac{1}{2} \mbox{ for } -1 \leq z \leq 1
    \end{equation}
    \item The Mattern kernel with $p > 3/2$. This kernel is translation invariant with associated density $q(y)=\prod_{i=1}^dq_1(y_i)$ with 
    \begin{equation}
        q_1(z) = \frac{1}{(1+x^2)^p}
    \end{equation}
    and 
    \begin{equation}
        E[X^2]<\infty, X \sim q_1
    \end{equation}
\end{enumerate}
\section{An example of a non-autonomous system}
{We provide in this appendix a toy example of a non-autonomous system, namely the harmonic oscillator with sinusoidal input force
\begin{equation}
    \ddot y +0.001\dot y + 10000y = cos(t)
\end{equation}
The kernel is an explicit Fourier random feature kernel with $p=200$ random features as well as a constant term, where time was included as input together with the spatial variables. Each feature was centered and standardized using the training set only for computing the mean and standard deviation. The functions in the corresponding RKHS are then  
\begin{equation}
f([x_1,x_2,t]) = \left[\begin{array}{c}
\sum_{i=1}^{p}\alpha_{i}\cos([z_{1i},z_{2i},z_{3,i}]\cdot [x_1,x_2,t]) + \beta_{i}\sin([z_{1i},z_{2i},z_{3,i}]\cdot [x_1,x_2,t] + \omega_1) \\
\sum_{i=1}^{p}\gamma_{i}\cos([z_{1i},z_{2i},z_{3,i}]\cdot [x_1,x_2,t]) + \delta_{i}\sin([z_{1i},z_{2i},z_{3,i}]\cdot [x_1,x_2,t] + \omega_2 )
\end{array}\right]
\end{equation}
Where the $z$ variables are iid sampled from a standard Normal (or Gaussian) distribution and the parameters $\{\alpha_i,\beta_i,\gamma_i,\delta_i\}, i=1 \ldots p$ along with $\{\omega_j\}, j=1,2$ are learned from the training set. 
Figure \ref{fig:non_auto_plots} illustrates the output ODE-RKHS algorithm for this system. } 
\begin{figure}[H]
    \centering
    \subfigure[3-D Non-autonomous]
    {
    \includegraphics[width=.5\textwidth]{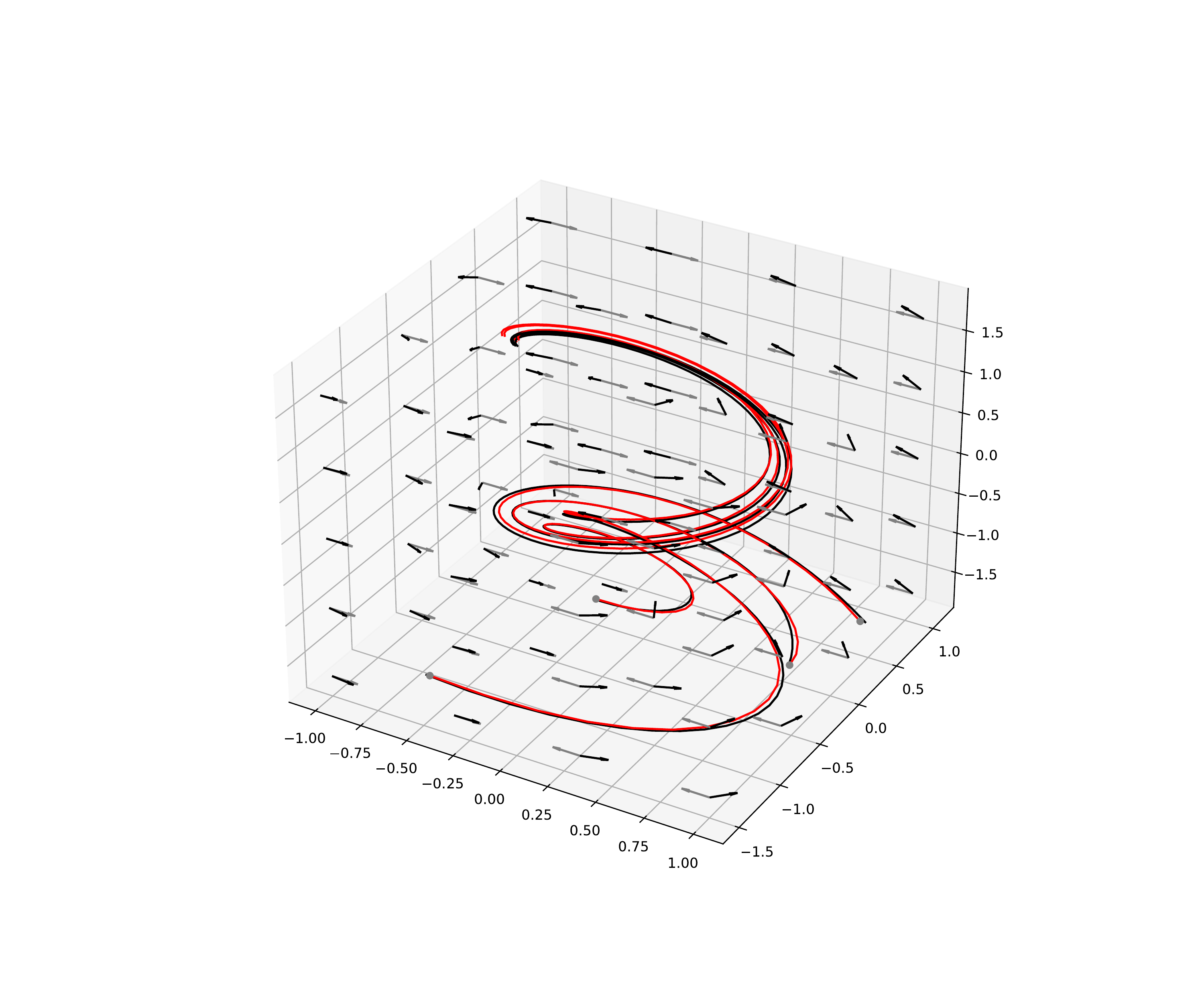}
    }
    \subfigure[1-D Solution curves]{
    \includegraphics[width=.4\textwidth]{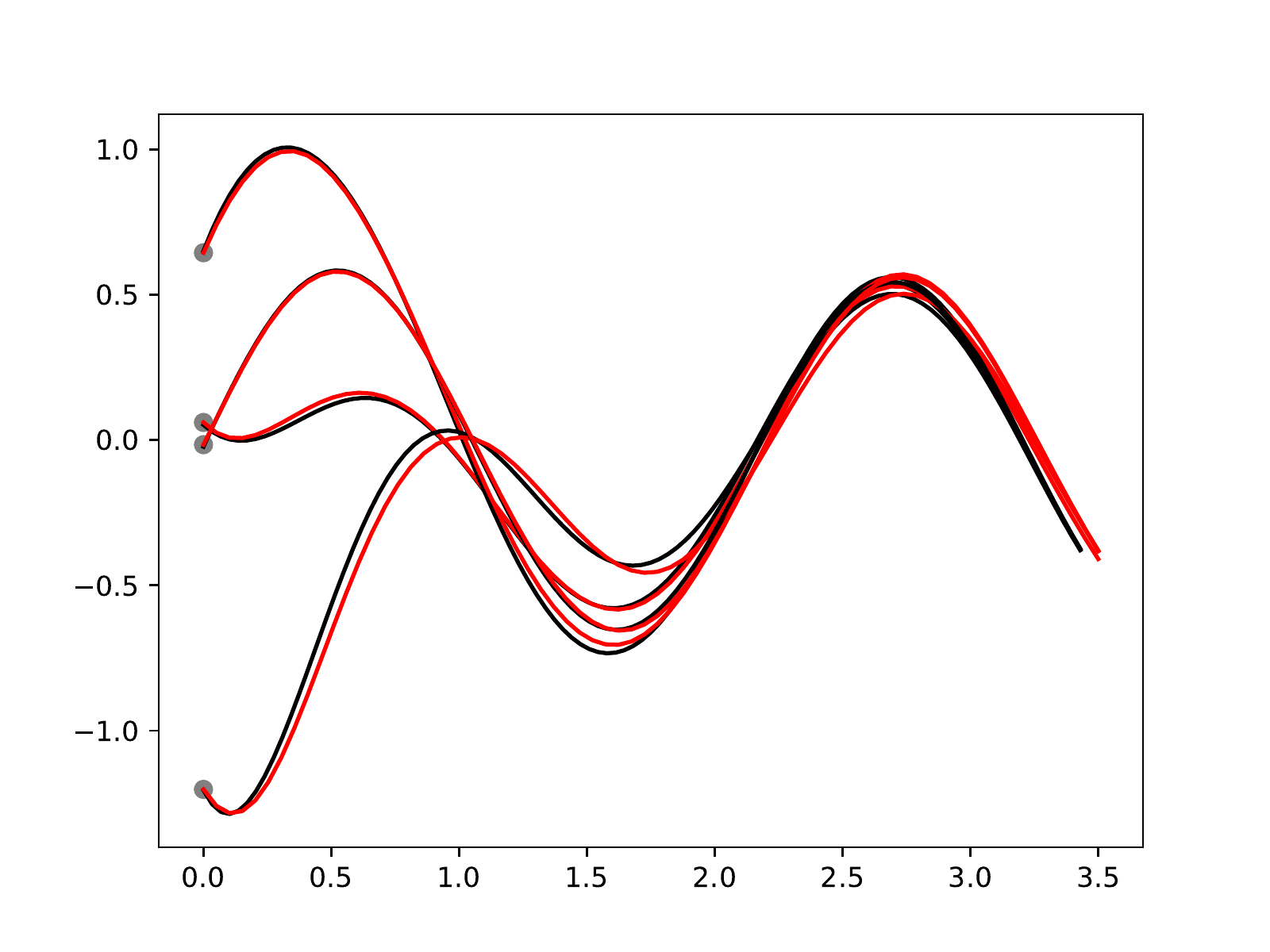}}
    \caption{{ (a): plot of the 2D system where the $z$-axis is time. Black arrows: true vector field. Grey arrows: estimated vector field. Black curves: true trajectories. Red curves: estimated trajectories. (b): Grey points: initial conditions. Black curves: true trajectories. Red curves: estimated trajectories.}}
    \label{fig:non_auto_plots}
\end{figure}

\bibliographystyle{model1-num-names}
\bibliography{MainArxiv111223.bib}

\end{document}